\pgfplotsset{
    x tick style={color=black},
    y tick style={color=black}
}
\renewcommand{\labelenumi}{(\alph{enumi})}
\renewcommand\theenumi\labelenumi
\newtheorem{theorem}{Theorem}
\newtheorem{lemma}[theorem]{Lemma}
\newtheorem{corollary}[theorem]{Corollary}
\newtheorem{definition}[theorem]{Definition}
\newcommand{\om}{\textsc{OneMax}\xspace}
\newcommand{\onemax}{\om}
\newcommand{\cocz}{\textsc{COCZ}\xspace}
\newcommand{\omm}{\textsc{OneMinMax}\xspace}
\newcommand{\oneminmax}{\omm}
\newcommand{\lotz}{\textsc{LOTZ}\xspace}
\newcommand{\ojzj}{\textsc{OneJumpZeroJump}\xspace}
\newcommand{\lfe}{\textsc{LargeFront$_{\eps}$}\xspace}
\newcommand{\R}{\ensuremath{\mathbb{R}}}
\newcommand{\N}{\ensuremath{\mathbb{N}}} 
\DeclareMathOperator{\cDis}{cDis}
\DeclareMathOperator{\mei}{MEI}
\DeclareMathOperator{\opt}{opt}
\DeclareMathOperator{\hv}{HV}
\newcommand{\NSGA}{\mbox{NSGA-II}\xspace}
\let\originalleft\left
\let\originalright\right
\renewcommand{\left}{\mathopen{}\mathclose\bgroup\originalleft}
\renewcommand{\right}{\aftergroup\egroup\originalright}
\date{}
\newcommand{\eps}{\varepsilon} 
\begin{document}
\sloppy

\title{Approximation Guarantees for the Non-Dominated Sorting Genetic Algorithm II (NSGA-II)}

\author{Weijie Zheng\\ 
         School of Computer Science and Technology\\
         International Research Institute for Artificial Intelligence\\
       Harbin Institute of Technology
\and Benjamin Doerr\thanks{Corresponding author.}\\ Laboratoire d'Informatique (LIX)\\ \'Ecole Polytechnique, CNRS\\ Institut Polytechnique de Paris\\ Palaiseau, France}

\maketitle

\begin{abstract}
Recent theoretical works have shown that the NSGA-II efficiently computes the full Pareto front when the population size is large enough. In this work, we study how well it approximates the Pareto front when the population size is smaller.

For the \omm benchmark, we point out situations in which the parents and offspring cover well the Pareto front, but the next population has large gaps on the Pareto front. Our mathematical proofs suggest as reason for this undesirable behavior that the NSGA-II in the selection stage computes the crowding distance once and then removes individuals with smallest crowding distance without considering that a removal increases the crowding distance of some individuals.

We then analyze two variants not prone to this problem. For the NSGA-II that updates the crowding distance after each removal (Kukkonen and Deb (2006)) and the steady-state NSGA-II (Nebro and Durillo (2009)), we prove that the gaps in the Pareto front are never more than a small constant factor larger than the theoretical minimum. This is the first mathematical work on the approximation ability of the NSGA-II and the first runtime analysis for the steady-state NSGA-II. Experiments also show the superior approximation ability of the two NSGA-II variants.
\end{abstract}

\section{Introduction}\label{sec:int}
While the theory of evolutionary algorithms (EAs), in particular, the mathematical runtime analysis, has made substantial progress in the last 25 years~\cite{NeumannW10,AugerD11,Jansen13,ZhouYQ19,DoerrN20}, the rigorous understanding of multi-objective EAs (MOEAs), started 20 years ago~\cite{LaumannsTZWD02,Giel03,LaumannsTZ04}, is less developed and is massively lagging behind their success in practice. However, in the last years some significant progress has been made, for example~\cite{BianQT18ijcaigeneral,RoostapourNNF19,QianYTYZ19,QianBF20,BianFQY20,ZhengD23ecj,Crawford21,DinotDHW23}. In particular, now the first analyses of MOEAs that are massively used in practice have appeared, namely for the MOEA/D~\cite{LiZZZ16}, the SMS-EMOA~\cite{BianZLQ23}, and most notably the \NSGA~\cite{ZhengLD22} (see~\cite{ZhengD23aij} for the journal version), the by far dominant algorithm in practice~\cite{ZhouQLZSZ11}.

The analysis of the \NSGA~\cite{ZhengLD22} proved that several variants of this algorithm can compute the full Pareto front of the \oneminmax and \lotz benchmarks efficiently when the population size is chosen by a constant factor larger than the size of the Pareto front (which is $n+1$ for these two problems in $\{0,1\}^n$). However, it was also proven (for the \oneminmax problem with problem size $n$) that a population size strictly larger than the Pareto front is necessary -- if these two sizes are only equal, then with probability $1 - \exp(-\Omega(n))$ for an exponential number of iterations the population of the \NSGA does not cover a constant fraction of the Pareto front. Experiments show that this fraction is roughly 20\% for the \oneminmax benchmark and roughly 40\% for the \lotz benchmark. Several runtime analyses of the \NSGA quickly followed this first work, see the literature review in Section~\ref{ssec:nsgaii}. However, all these works only discuss the efficiency of covering the full Pareto front.

Since we cannot always assume that the \NSGA is run with a population size larger than the Pareto front size by a constant factor -- both because the algorithm user does not know the size of the Pareto front and because some problems have a so large Pareto front that using a comparably large population size is not possible --, a deeper understanding of the approximation performance of the \NSGA is highly desirable. This is the target of this work. 

There is some reason to be optimistic: The experiments conducted in~\cite{ZhengLD22} for the case that the population size equals the size of the Pareto front not only gave the negative result that 20\% or 40\% of the Pareto front was not covered, but they also showed that the missing points are relatively evenly distributed over the Pareto front: the largest empty interval ever seen in all experiments was of length~$4$. Hence the population evolved by the \NSGA in these experiments was a very good approximation of the Pareto front.

\textbf{Our results:} Unfortunately, we observe that these positive findings do not extend to smaller population sizes. However, our negative results let us detect in the selection mechanism of the \NSGA a reason for the lower-than-expected approximation capability. This suggests a natural modification of the algorithm. For this, we prove that it computes populations that approximate the Pareto fronts of the \oneminmax and \lotz benchmarks very well. 

In detail, when we ran experiments for \omm with problem size $n=601$ and population sizes $N=(n+1)/2=301, \lceil (n+1)/4 \rceil=151$, and $\lceil (n+1)/8 \rceil=76$, empty intervals with sizes around $8$, $15$, and $26$ regularly occurred (see Table~\ref{tbl:omm}).
Unfortunately, due to the complicated population dynamics of the \NSGA, we were not able to prove a meaningful lower bound. Nevertheless, the experimental results show that even on a simple benchmark like \oneminmax, the largest empty interval the population has on the Pareto front is significantly larger than the optimal value $\lceil \frac{(n+1)-1}{N-1}\rceil$ ($3$, $5$, and $9$ for these population sizes above), which would result from a perfect distribution of the population on the Pareto front. 

To better understand how this discrepancy can arise, we regard two synthetic examples. We show that when the combined parent and offspring population is such that each point on the Pareto front is covered exactly once (this implies that the population size is essentially half the size of the Pareto front), then with high probability the next parent population does not cover an interval of length $\Theta(\log n)$ on the Pareto front (whereas simply removing every second point would give a population such that each point on the Pareto front has a neighbor that is covered by the population). We further construct a more artificial example where the combined parent and offspring population covers the Pareto front apart from isolated points, but the next parent population does not cover an interval of length $n/3$ of the Pareto front. 

The reason why we were able to construct such examples is the following property of the selection scheme of the \NSGA. To select the new parent population, the \NSGA uses as first criterion the non-dominated sorting and then the crowding distance. The crowding distance, however is not updated during the selection process. That is, while removing individuals with smallest crowding distance, the changing crowding distance of the remaining individuals is not taken into account, but instead the algorithm proceeds with the initial crowding distance. We assume that this design choice was made for reasons of efficiency -- by not updating the crowding distance, it suffices to sort the combined parent and offspring population once and then remove the desired number of individuals. We note that this shortcoming of the traditional selection method of the \NSGA was, with intuitive arguments, already detected by Kukkonen and Deb~\cite[Figure~2]{KukkonenD06}, a paper which unfortunately is not too well-known in the community (we overlooked it when preparing the conference version~\cite{ZhengD22gecco} and none of the reviewers detected this oversight; our deepest thanks to Hisao Ishibuchi for pointing us to the work of Kukkonen and Deb). 

There is a natural remedy to this shortcoming, proposed also by Kukkonen and Deb~\cite{KukkonenD06}, and this is to sequentially remove individuals always based on the current crowding distance. This procedure can be implemented very efficiently: The removal of one individual changes the crowding distance of at most $4$ other individuals (in a bi-objective problem), so at most $4$ crowding distance values need to be updated. There is no need for a new sorting from scratch when we use as a data structure a priority queue. With this implementation, the selection based on the current crowding distance takes not more than $O(N \log N)$ operations, which is the same asymptotic complexity as the one of sorting the individuals by their initial crowding distance in the original \NSGA. We note that both operations are fast compared to the non-dominated sorting step with its quadratic time complexity.

For this modified \NSGA, the problems shown above for the traditional \NSGA cannot occur. For problem size $n=601$, the modified algorithm for $N = 301$ never created an empty interval on the Pareto front larger than optimal value of $3$. For $N=151$, in more than half the iterations all empty intervals observed the optimal value of $5$, in the other iterations the maximum empty interval (MEI) had a length of~$6$. For $N=76$, the median MEI value was $11$ (optimal value:~$9$). Hence the modified algorithm distributes the population on the Pareto front in a significantly more balanced manner.

For this algorithm, we can also prove a guarantee on the approximation quality. After a time comparable to the time needed to find the two extremal solutions of the front, for all future generations with probability one the largest empty interval on the Pareto front has length at most $\max\{\frac{2n}{N-3},1\}$, hence at most a constant factor larger than the theoretical minimum of $\lceil \frac{(n+1)-1}{N-1} \rceil$. Consequently, even with a population size not large enough to cover the full Pareto front, this algorithm computes very good approximations to the Pareto front. 

There is a second variant of the \NSGA proposed in the past that, by definition, is not prone to the problem of working with initial crowding distance values, namely the steady-state \NSGA proposed by Durillo et al.~\cite{DurilloNLA09}, which generates a single offspring per iteration. There are no theoretical result on this algorithm yet, but the empirical results of Nebro and Durillo showed its very competitive approximation strength. For this reason, we also conduct a mathematical runtime analysis of this algorithm on the \omm benchmark. We prove the same good approximation guarantees as for the \NSGA with current crowding distance. Our experiments in Section~\ref{sec:exp} verify this similarity and also indicate a more steady behavior of the steady-state \NSGA.

This work extends our conference paper~\cite{ZhengD22gecco} majorly in the following ways. This version obtains tighter estimates of the relation between the $\eps$-dominance and maximal empty interval length. We further discuss the relation between the hypervolume indicator and the maximum empty interval length, which was not contained in the conference version. This version improves approximation guarantee for the \NSGA with current crowding distance from $4n/(N-3)$ to $2n/(N-3)$. We also added a new section discussing the approximation guarantee of the steady-state \NSGA and the corresponding experiments. Besides, this version contains all mathematical proofs that were omitted in the conference version for reasons of space.

This work is organized as follows. Section~\ref{sec:pre} brief{}ly introduces bi-objective optimization and the \NSGA. Section~\ref{sec:app} discusses the approximation measures that will be used in this work. The approximation difficulties of the traditional \NSGA are theoretically shown via two synthetic examples in Section~\ref{sec:nsgaii}. Section~\ref{sec:onthefly} introduces our modified variant of the \NSGA and conducts the theoretical analysis of its approximation ability. Our experiments are discussed in Section~\ref{sec:exp}. Section~\ref{sec:con} concludes this work.

\section{Preliminaries}\label{sec:pre}

\subsection{Bi-objective Optimization and the \omm Benchmark}

In this paper, we regard on bi-objective optimization problems $f=(f_1,f_2):\{0,1\}^n \rightarrow \R^2$ with each objective to be maximized. For $x,y\in\{0,1\}^n$, we say that $x$ \emph{strictly dominates} $y$, denoted by $x\succ y$, if $f_1(x)\ge f_1(y), f_2(x)\ge f_2(y)$ and at least one of the inequalities is strict. If $x$ cannot be strictly dominated by any solution in $\{0,1\}^n$, we say that $x$ is \emph{Pareto optimal} and that $f(x)$ is a \emph{Pareto front point}. The set of all Pareto front points is called \emph{Pareto front}. The typical aim for a multi-objective optimizer is to compute the Pareto front, that is, compute a set of solutions such that $f(P)$ is the Pareto front, or to approximate it well.

We shall work with the popular bi-objective benchmark \omm. \omm was first proposed in~\cite{GielL10} and is similar to the \cocz benchmark defined in~\cite{LaumannsTZ04}. The first objective of \omm counts the number of zeros in the bit-string, and the second objective counts the number of ones. More specifically, for any $x=(x_1,\dots,x_n)\in\{0,1\}^n$, the \omm function is defined by 
$$f(x)=(f_1(x),f_2(x))=\left(n-\sum_{i=1}^n x_i, \sum_{i=1}^n x_i\right).$$
It is not difficult to see that any solution $x\in\{0,1\}^n$ is Pareto optimal and that the Pareto front is $M:=\{(0,n),(1,n-1),\dots,(n,0)\}$.

\subsection{The Non-Dominated Sorting Genetic Algorithm II (\NSGA)}\label{ssec:nsgaii}
We now give a brief introduction to the \NSGA, which was first proposed in~\cite{DebPAM02} and now is the by far dominant MOEA in practice~\cite{ZhouQLZSZ11}. It works with a fixed population size $N$. Consequently, each time new individuals are generated, the \NSGA needs to remove individuals to maintain this fixed population size. To this aim, the \NSGA computes a complete order on the combined parent and offspring population that uses the dominance as the first criterion and a diversity measure (crowding distance) as the second criterion, and removes the worst individuals. 

In more detail, after the random initialization, in each generation $t$, an offspring population $Q_t$ with size $N$ is generated from the parent population $P_t$. The \NSGA now needs to remove $N$ individuals from the combined population $R_t=P_t\cup Q_t$. To this aim, it divides $R_t$ into several fronts $F_1,F_2,\dots,$ where $F_1$ is the set of the non-dominated solutions in $R_t$, and $F_i,i>1,$ is the set of the non-dominated solutions in $R_t\setminus \{F_1,\dots,F_{i-1}\}$. For the first index $i^*$ such that the size of $\bigcup_{i=1}^{i^*}F_i$ is at least $N$, the \NSGA will calculate the \emph{crowding distance}, denoted by $\cDis$, of the individuals in $F_{i^*}$ as follows.\footnote{If the crowding distance is also used for the parent selection (like in tournament selection), then now (Algorithm~\ref{alg:nsgaii}, step~\ref{ste:cDis}) also the crowding distance of the individuals in $F_{1},\dots,F_{i^*-1}$ is computed.} For each objective, the individuals are sorted according to their objective values. The $\cDis$ value of the first and last point in the sorted list is infinite. For the other individuals, the $\cDis$ with respect to the current objective is the normalized distance of the objective values of its two neighbors in the list. The complete $\cDis$ of an individual is the sum of its $\cDis$ components for all objectives. See Algorithm~\ref{alg:cDis} for a complete description of the computation of the crowding distance. Then the $|\bigcup_{i=1}^{i^*}F_i|-N$ individuals in $F_{i*}$ with smallest crowding distance value and all individuals in $F_i, i>i^*$, are removed (ties broken randomly). The complete \NSGA framework is shown in Algorithm~\ref{alg:nsgaii}. 

We note that, for reasons of generality, in Algorithm~\ref{alg:cDis} we make no assumption on how individuals with equal objective values are sorted. As pointed out in~\cite{BianQ22}, for bi-objective problems the sorting w.r.t.\ the second objective can be taken as the inverse of the first sorting. This has mild algorithmic advantages (but note that the quadratic time complexity of the non-dominated sorting procedure dominates the complexity of the selection stage) and can lower the minimum required population size to compute the whole Pareto front by a factor of two. It is clear that this idea can only make a difference when two individuals with identical value in one objective are present, as only then the sorting is not unique. In our setting with the population size smaller than the size of the Pareto front, we do not expect this to happen too often, so we do not expect significantly better results under this assumption.

Since it is not the most central aspect of our work, we have not yet discussed how the offspring population is computed. As in~\cite{ZhengLD22}, we shall regard a mutation-only version of the \NSGA. For a problem like \oneminmax, composed of two very simple unimodal objectives, we do not expect that the use of crossover gives significant advantages. Also, we are convinced that our proofs can easily be extended to variants that use crossover with some constant probability (less than one) -- we note that in particular the central result of this work regarding the selection of the next population (Lemma~\ref{lem:selection}) does not rely on any assumption about how the offspring are created. As mutation operators, we shall regard the two classic ones of \emph{one-bit mutation}, which flips a random bit in the argument, and \emph{standard bit-wise mutation}, which flips each bit independently with probability~$\frac 1n$. We consider three ways to select the parents for the mutation (mating selection). In \emph{fair selection}, we let each of the $N$ parents generate exactly one offspring. In \emph{random selection}, we $N$ times independently and uniformly at random (hence ``with replacement'') select a parent and let it create an offspring. In \emph{binary tournament selection}, we $N$ times independently and uniformly at random pick two parents, and let the better one (according to non-dominated sorting and crowding distance, ties broken randomly) create an offspring.

\begin{algorithm}[tb]
    \caption{Computation of the crowding distance $\cDis(S)$}
    \textbf{Input:} $S=\{S_1,\dots,S_{|S|}\}$, a set of individuals\\
    \textbf{Output:} $\cDis(S)=(\cDis(S_1),\dots,\cDis(S_{|S|}))$, where $\cDis(S_i)$ is the crowding distance for $S_i$
		
    \begin{algorithmic}[1]
    \STATE $\cDis(S)=(0,\dots,0)$
    \FOR {each objective $f_i$}
    \STATE {Sort $S$ in order of descending $f_i$ value: $S_{i.1},\dots,S_{i.{|S|}}$}
    \STATE {$\cDis(S_{i.1})=+\infty, \cDis(S_{i.{|S|}})=+\infty$}
    \FOR {$j=2,\dots, |S|-1$}
    \STATE {$\cDis(S_{i.j})=\cDis(S_{i.j}) + \frac{f_i(S_{i.{j-1}})-f_i(S_{i.{j+1}})}{f_i(S_{i.1})-f_i(S_{i.{|S|}})}$}
    \ENDFOR
    \ENDFOR
    \end{algorithmic}
    \label{alg:cDis}
\end{algorithm}

\begin{algorithm}[!ht]
    \caption{\NSGA}
    \begin{algorithmic}[1]
    \STATE {Uniformly at random generate the initial population $P_0=\{x_1,x_2,\dots,x_N\}$ with $x_i\in\{0,1\}^n,i=1,2,\dots,N.$}\label{ste:initialize}
    \FOR{$t = 0, 1, 2, \dots$} \label{ste:iterate}
    \STATE {Generate the offspring population $Q_t$ with size $N$}\label{ste:generate}
    \STATE {Use fast-non-dominated-sort() in~\cite{DebPAM02} 
    to divide $R_t$ into fronts $F_1,F_2,\dots$}
    \label{ste:sort}
    \STATE {Find $i^* \ge 1$ such that $|\bigcup_{i=1}^{i^*-1}F_i| < N$ and $|\bigcup_{i=1}^{i^*}F_i| \ge N$ }\label{ste:rank}
    \STATE {Use Algorithm~\ref{alg:cDis} to separately calculate the crowding distance of each individual in $F_{1},\dots,F_{i^*}$}\label{ste:cDis}
    \STATE {Let $\tilde{F}_{i^{*}}$ be the $N-|\bigcup_{i=1}^{i^*-1}F_{i}|$ individuals in $F_{i^*}$ with largest crowding distance, chosen at random in case of a tie}\label{ste:final front}
    \STATE {$P_{t+1}=\left(\bigcup_{i=1}^{i^*-1}F_i\right)\cup\tilde{F}_{i^*}$}\label{ste:new parents}
    \ENDFOR 
    \end{algorithmic}
    \label{alg:nsgaii}
\end{algorithm}


We brief{}ly review the state of the art in runtime analysis for the \NSGA. In the first mathematical runtime analysis\footnote{As usual, we call a mathematical runtime analysis a theoretical work estimating the number of iterations or fitness evaluations it takes to reach a certain goal. This is different from the implementational complexity of the operations for each iteration, as discussed, e.g., already in the original \NSGA paper~\cite{DebPAM02}.} of the \NSGA~\cite{ZhengLD22}, it was shown that the \NSGA with several mating selection and mutation strategies and population size $N = \Omega(n)$ sufficiently large, efficiently computes the Pareto fronts of the \oneminmax and \lotz benchmarks, namely in expected $O(Nn \log n)$ and $O(N n^2)$ fitness evaluations. For $N = \Theta(n)$, these are the same asymptotic complexities as those known for the basic \emph{global simple evolutionary multi-objective optimizer (GSEMO)}~\cite{Giel03}, namely $O(n^2\log n)$ for \omm and $O(n^3)$ for \lotz. However, the work~\cite{ZhengLD22} also showed that with a population size of $n+1$, that is, equal to the size of the Pareto front, it takes at least an exponential time to compute a population covering the Pareto front better than with a constant-factor loss.

This work has quickly led to several follow-up works beyond the conference version~\cite{ZhengD22gecco} of this work. In~\cite{BianQ22}, it was observed that using the same sorting for the two objectives allows to lower the minimum required population size to twice the size of the Pareto front (this is proven for the \lotz benchmark, but the argument can easily be extended to \oneminmax). Also, in this work for the first time an \NSGA with crossover is regarded and runtime guarantees analogous to those in~\cite{ZhengLD22} are proven. The most profound result of this work is that significant runtime improvements (from $O(N n^2)$ to $O(N n)$ when optimizing \lotz with $N = \Theta(n)$) can be obtained from a newly proposed tournament selection operator, which chooses the tournament size chosen uniformly at random from the range $[1..N]$. 
This is particularly interesting in that here a uniform random choice was successful, whereas most previous works using random parameter choices employ heavy-tailed distributions, see, e.g.,~\cite{DoerrLMN17,AntipovBD21gecco,ZhengD23ecj,DangELQ22}. The first mathematical runtime analysis on a multimodal problem, the \ojzj benchmark from~\cite{ZhengD23ecj}, was conducted in~\cite{DoerrQ23tec}. It shows that when the population size is at least $4$ times the Pareto front size, then the \NSGA with the right population size is as effective as the GSEMO on this benchmark. The first lower bounds for the \NSGA~\cite{DoerrQ23LB} show that the previous results for \oneminmax~\cite{ZhengLD22} and \ojzj~\cite{DoerrQ23tec} are asymptotically tight, even for larger population sizes. This implies that increasing the population size above the minimum required size immediately leads to asymptotic performance losses. This is very different from single-objective optimization, where often a larger range of population sizes gives the same asymptotic performance, see, e.g., \cite{JansenJW05,DoerrK15} for such results for the \onemax benchmark. Two results showing that crossover can give asymptotic performance gains appeared in parallel~\cite{DangOSS23aaai,DoerrQ23crossover}. The first runtime analysis for a combinatorial optimization problem, the bi-objective minimum spanning tree problem previously analyzed for the GSEMO~\cite{Neumann07}, appeared in~\cite{CerfDHKW23}. A runtime analysis in the presence of noise (together with the independent parallel work~\cite{DinotDHW23} the first mathematical runtime analysis of a MOEA in a noisy environment) was conducted in~\cite{DangOSS23gecco}. That the \NSGA can have difficulties with more than two objectives was shown for \oneminmax in~\cite{ZhengD23tevc}, whereas the {NSGA-III} was proven to be efficient on the $3$-objective \oneminmax problem in~\cite{WiethegerD23}.

As pointed in Section~\ref{sec:int}, all these theoretical works consider the time taken to cover the full Pareto front, so no other work exists on approximating the Pareto front.



\section{Approximation Measures}\label{sec:app}

For reasons of efficiency, instead of computing the whole Pareto front, one often resorts to approximating it, that is, to computing a set of solutions that is a reasonable representation for the whole front. This raises the question of how to measure the approximation quality of such a set of solutions. 
Several different approximation measures have been proposed in the literature such as multiplicative $\eps$-dominance~\cite{LaumannsTDZ02}, various kinds of generational distances~\cite{VanVeldhuizenL98,BosmanT03,CoelloR04}, or the hypervolume~\cite{ZitzlerT98}. For the \oneminmax problem regarded in this work, the particular structure of the problem suggests to use a more elementary measure, namely the maximum length of an empty interval on the Pareto front. We define this measure in Section~\ref{sec:mei}. We will then in Sections~\ref{sec:eps} and~\ref{sec:hv} compare the new measure with two classic measures, namely $\eps$-dominance, which has frequently been regarded in theoretical works, and hypervolume, which is mostly widely used in general~\cite{ShangIHP21}. We are optimistic that our new measure can equally easily be compared  to other measures.

\subsection{Maximal Empty Interval Size}\label{sec:mei}
For \omm with problem size $n$ that this paper will analyze, each possible objective value is on the Pareto front and the first objective values of full Pareto front are exactly $0,1,\dots,n$. Any missing Pareto front point can be directly seen in $[0..n]$, hence, we now simply use a measure about the size of the maximal empty interval, denoted as MEI, inside $[0..n]$ in terms of the solutions that one MOEA reaches and with respect to $f_1$ values. If the maximal empty interval size is as small as possible, then the MOEA can approximate the Pareto front as well as possible. 
The formal definition of the MEI of a set $U$ in the objective space is as follows.

\begin{definition}
Let $S=\{(s_1,n-s_1),\dots,(s_m,n-s_m)\}$ be a subset of the Pareto front $M$ of \omm. Let $j_1,j_2,\dots,j_{m}$ be the sorted list of $s_1,\dots,s_m$ in the increasing order (ties broken uniformly at random). We define the \emph{maximal empty interval size} of $S$, denoted by $\mei(S)$, as 
\begin{align*}
\mei(S)=\max\{j_{i+1}-j_i \mid i=1,\dots,m-1\}.
\end{align*}
For $n \in \N_{\ge 2}$, we further define 
\begin{align*}
\mei_{\opt}(N):=\min\{\mei(S) \mid S\subseteq M, |S|\le N, (0,n) \in S, (n,0) \in S\}.
\end{align*} 
Obviously, this is the smallest $\mei$ that an MOEA with a fixed population size $N$ can obtain when the extremal points $(0,n)$ and $(n,0)$ are covered.
\end{definition}

It is not difficult to see that $\mei_{\opt}(N)$ is witnessed by a set $S$ as evenly distributed as possible. We explicitly formulate this observation in the following lemma.

\begin{lemma}
For all $N\in \N_{\ge 2}$, we have $\mei_{\opt}(N)=\lceil \frac{n}{N-1}\rceil$.
\label{lem:optimalmei}
\end{lemma}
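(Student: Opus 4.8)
The plan is to reduce the statement to an elementary optimization over integer partitions and then prove matching lower and upper bounds. Since $S\subseteq M$ and the points of $M$ have pairwise distinct first coordinates, a set $S$ as in the definition is completely described by its set of first-objective values $\{s_1,\dots,s_m\}\subseteq[0..n]$, which is forced to contain $0$ and $n$ because $(0,n),(n,0)\in S$ and which has at most $N$ elements. Writing the sorted values as $0=j_1<j_2<\dots<j_m=n$ with $m\le N$, the consecutive gaps $g_i:=j_{i+1}-j_i$ ($i=1,\dots,m-1$) are strictly positive integers with $\sum_{i=1}^{m-1}g_i=j_m-j_1=n$, and by definition $\mei(S)=\max_i g_i$. (The random tie-breaking in the definition plays no role here, as the $s_i$ are automatically distinct.) Hence computing $\mei_{\opt}(N)$ is exactly the problem of writing $n$ as a sum of at most $N-1$ positive integers so as to minimize the largest summand.

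For the lower bound, I would take any feasible $S$ with gaps $g_1,\dots,g_{m-1}$ as above. Since $m-1\le N-1$ and the $g_i$ are positive integers summing to $n$, a simple averaging argument gives $\max_i g_i\ge \frac{n}{m-1}\ge\frac{n}{N-1}$; as $\max_i g_i$ is an integer, $\max_i g_i\ge\lceil\frac{n}{N-1}\rceil$. Taking the minimum over all feasible $S$ yields $\mei_{\opt}(N)\ge\lceil\frac{n}{N-1}\rceil$. Note that $m\ge2$ always holds, since $0\neq n$, so at least one gap exists and the average is well defined.

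For the matching upper bound, I would exhibit an as-evenly-as-possible distributed witness $S$. If $N-1\ge n$, take $S=M$, the full front, which has $n+1\le N$ points; then every gap equals $1=\lceil\frac{n}{N-1}\rceil$. If $N-1<n$, write $n=q(N-1)+r$ with $q=\lfloor\frac{n}{N-1}\rfloor\ge1$ and $0\le r<N-1$, and choose $N-1$ gaps consisting of $r$ copies of $q+1$ and $(N-1-r)$ copies of $q$. Their partial sums define strictly increasing integer points $0=j_1<\dots<j_N=n$ in $[0..n]$, so the corresponding $S$ has exactly $N$ points, contains $(0,n)$ and $(n,0)$, and satisfies $\mei(S)=q+1$ if $r>0$ and $\mei(S)=q$ if $r=0$; in both subcases this equals $\lceil\frac{n}{N-1}\rceil$. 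Either way $\mei_{\opt}(N)\le\lceil\frac{n}{N-1}\rceil$, which together with the lower bound proves the claim.

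The only genuinely delicate point is the bookkeeping in the upper-bound construction: one must verify that the chosen gaps are all positive integers, which is what forces the case distinction $N-1\ge n$ versus $N-1<n$ (the former being precisely the regime in which the population can already cover the entire front), and that the resulting points are distinct and lie in $[0..n]$, so that $S$ is a genuine subset of $M$ of the claimed size. Everything else is the routine averaging estimate and the identity that $q$ or $q+1$ (according to whether $r=0$) equals $\lceil\frac{n}{N-1}\rceil$.
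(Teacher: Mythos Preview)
Your proof is correct and follows essentially the same approach as the paper: an averaging argument for the lower bound and an explicit evenly-spaced witness for the upper bound. The only cosmetic difference is in the upper-bound construction: the paper takes $j_i=\min\{(i-1)\lceil n/(N-1)\rceil,\,n\}$ (which may collapse several points onto $n$, still yielding $|S'|\le N$), whereas you partition $n$ into $N-1$ parts of size $q$ or $q+1$ and handle $N-1\ge n$ separately; both are equally valid.
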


\begin{proof}
Consider any $S\subseteq M$ with $|S|\le N, (0,n)\in S,$ and $(n,0) \in S$. Let $J$ be the set of the first objective values of $S$, and let $J=\{j_1,\dots,j_{|J|}\}$ with $j_a< j_b$ for $a<b$. Then we have $j_1=0$ and $j_{|J|}=n$, and thus
\begin{align*}
\sum_{i=1}^{|J|-1}j_{i+1}-j_i = j_{|J|}-j_1=n.
\end{align*} 
Hence, there exists $i_0\in[1..|J|-1]$ such that $j_{i_0+1}-j_{i_0}\ge n/(|J|-1)$. Since $j_{i_0+1}$ and $j_{i_0}$ are integers, we then have $j_{i_0+1}-j_{i_0}\ge \lceil n/(|J|-1) \rceil$. As $\mei(S)=\max\{j_{i+1}-j_{i}\mid i=1,\dots,|J|\}$, we know that $\mei(S)\ge  \lceil n/(|J|-1) \rceil \ge  \lceil n/(N-1) \rceil$,
where the last inequality uses $|J| \le |S| \le N$. Hence, $\mei_{\opt}(N) \ge \lceil {n}/{(N-1)}\rceil$.

Now we prove that $\mei_{\opt}(N) \le \lceil n/(N-1) \rceil$. Let $j_i=\min\{(i-1)\left\lceil n/(N-1)\right\rceil,n\}$ for $i=1,\dots, N$. Then $j_1=0$ and $j_N=n$. Consider the set $S' = \{(j_i,n-j_j) \mid i \in [1..N]\}$. 
It is not difficult to see that
\begin{align*}
j_{i+1} - j_i = \min\left\{(i+1) \left\lceil\tfrac{n}{N-1}\right\rceil,n\right\}-\min\left\{i\left\lceil\tfrac{n}{N-1}\right\rceil,n\right\}\le \left\lceil\tfrac{n}{N-1}\right\rceil
\end{align*}
for all $i \in [1..N-1]$. Hence $\mei(S')\le\lceil n/(N-1) \rceil$. By definition of $\mei_{\opt}(N)$, we know that $\mei_{\opt}(N) \le \mei(S') \le \lceil n/(N-1) \rceil$. 
\end{proof}

\subsection{$\eps$-Dominance}\label{sec:eps}
This subsection discusses the optimal approximation quality w.r.t.\ the classic $\eps$-dominance measure for \omm and compares this measure with the MEI.
\subsubsection{Background and Definition}

One way to measure the quality of solution sets is via $\eps$-dominance, which is a relaxed notion of dominance first defined in~\cite{LaumannsTDZ02}. A set $S$ of objective vectors is then called an $\eps$-approximation of the Pareto front if each point of the Pareto front is $\eps$-dominated by a point from~$S$. 

In this subsection, we only discuss multiplicative $\eps$-dominance and simply call it $\eps$-dominance, as it is the main variant discussed in~\cite{LaumannsTDZ02} and the variant most used in follow-up works. 
Here is the formal definition. 

\begin{definition}[(Multiplicative) $\eps$-dominance \cite{LaumannsTDZ02}]
Let $\eps>0$ and $m>0$ be the number of objectives. For $u,v \in \R^m$, we say $u$ \emph{$\eps$-dominates $v$}, denoted by $u \succeq_\eps v$, if and only if $(1+\eps)u \ge v$, that is, $(1+\eps)u_i \ge v_i$ for all $i=1,\dots,m$. 
	
Let $W=\{u \mid u \in \R^m\}$ be the whole objective vector set for a given problem. We say a subset $S\subseteq W$ is an \emph{$\eps$-approximation} for this problem if and only if for each $v \in W$, there exists $u\in S$ such that $u \succeq_\eps v$.
\label{def:eps}
\end{definition}

Here we briefly review the theoretical works utilizing $\eps$-dominance relation as the measure to evaluate the approximation performance of the MOEAs or as the basis to design the MOEAs considering more diversity in the survival selection. Horoba and Neumann~\cite{HorobaN08} defined the \lfe function and proved that to obtain an $\eps$-approximation, the GSEMO needs a $2^{\Omega(n^{1/4})}$ runtime with $1-2^{-\Omega(n^{1/4})}$ probability, while the GSEMO with a diversity strategy based on $\eps$-dominance only requires an expected runtime of $O(n^2\log n)$. A similar work with respect to the additive $\eps$-dominance relation was conducted in~\cite{HorobaN09}.
To reach an $\eps$-approximation for the \lfe, Brockhoff, Friedrich, and Neumann~\cite{BrockhoffFN08} proved that the runtime for the $(\mu+1)$-simple indicator-based evolutionary algorithm ($(\mu+1)$-SIBEA) is $O(n^2\log n)$. 

For the GSEMO with $\eps$-dominance diversity strategy, Neumann and Reichel~\cite{NeumannR08} proved that it can achieve good approximations {for the minimum cut problem} in expected polynomial time without restrictions on the graph weight, while the original GSEMO requires the bounded weight to ensure the efficient approximation. The efficiency of the GSEMO with $\eps$-dominance diversity strategy can also be witnessed in~\cite{NeumannRS11,PourhassanSN19}. 

Gutjahr~\cite{Gutjahr12} replaced the survival selection of the SEMO considering the common dominance by the additive $\eps$-dominance, inserted the modified SEMO into the adaptive Pareto sampling (APS) framework, and proved that for one stochastic multi-objective combinatorial optimization problem, the expected runtime for such APS can be bounded from above, with the bound depending on the expected runtime of the original SEMO on the deterministic counterpart.

\subsubsection{Relation of Maximal Empty Intervals and $\eps$-Approximations}

The two approximation measures MEI and $\eps$-approximation are almost equivalent. In this subsection, we show that for any $S \subseteq M$ with $(0,n),(n,0)\in S$ the smallest $\eps$ rendering $S$ an $\eps$-approximation satisfies $(\mei(S)-1)/(n-\mei(S))\le \eps \le \mei(S)/(n-\mei(S))$. 

 We first prove the following relation between the MEI and $\eps$-dominance.

\begin{lemma}\label{lem:eps}
Let $S$ be a subset of the Pareto front $M$ of \omm with $(0,n),(n,0)\in S$. Let $\eps = \mei(S)/(n-\mei(S))$. Then $S$ is an $\eps$-approximation for $\overline{M}:=\{(x,n-x)\mid x\in [0,n]\}$ (and thus also for $M$).
\end{lemma}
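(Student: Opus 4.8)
The plan is to fix an arbitrary point $v=(x,n-x)\in\overline M$ with $x\in[0,n]$ and exhibit a single point of $S$ that $\eps$-dominates it. Writing the sorted first coordinates of $S$ as $0=j_1<\dots<j_m=n$ (the two extreme values being forced by $(0,n),(n,0)\in S$), the value $x$ lies in some interval $[j_i,j_{i+1}]$ of length $d:=j_{i+1}-j_i\le\mei(S)$. The only two candidate dominators worth considering are the interval endpoints $(j_i,n-j_i)$ and $(j_{i+1},n-j_{i+1})$, since any point of $S$ farther away is dominated in the relevant objective by one of these.

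The first reduction is to note that each candidate imposes a single scalar condition. The left candidate $(j_i,n-j_i)$ already dominates $v$ in the second objective (as $n-j_i\ge n-x$), so $\eps$-dominance reduces to the first-objective requirement $(1+\eps)j_i\ge x$, i.e.\ $\eps\ge(x-j_i)/j_i$. Symmetrically, the right candidate dominates $v$ in the first objective, and $\eps$-dominance reduces to $(1+\eps)(n-j_{i+1})\ge n-x$, i.e.\ $\eps\ge(j_{i+1}-x)/(n-j_{i+1})$. Thus it suffices to show that for every $x\in[j_i,j_{i+1}]$ at least one of these two requirements is satisfied by $\eps=\mei(S)/(n-\mei(S))$.

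The heart of the argument is a monotonicity observation: as $x$ runs from $j_i$ to $j_{i+1}$, the left requirement $(x-j_i)/j_i$ increases from $0$, while the right requirement $(j_{i+1}-x)/(n-j_{i+1})$ decreases to $0$. Hence their pointwise minimum is maximized at the crossover value $x^\ast$ where the two are equal, and solving $(x^\ast-j_i)/j_i=(j_{i+1}-x^\ast)/(n-j_{i+1})$ shows that at $x^\ast$ both sides equal exactly $d/(n-d)$. Consequently, for every $x$ in the interval the smaller of the two requirements is at most $d/(n-d)$, so one of the two candidates $\eps$-dominates $v$ as soon as $\eps\ge d/(n-d)$. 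Since $d\le\mei(S)$ and $t\mapsto t/(n-t)$ is increasing on $[0,n)$, we have $d/(n-d)\le\mei(S)/(n-\mei(S))=\eps$, which closes the argument (the degenerate case $\mei(S)=n$, i.e.\ $S=\{(0,n),(n,0)\}$, gives $\eps=\infty$ and is trivial).

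The one point requiring care — and the main obstacle to a fully uniform write-up — is the two extreme intervals, where $j_i=0$ or $j_{i+1}=n$ makes one scalar condition degenerate through division by zero. For the first interval I would use the right endpoint $(j_2,n-j_2)$ throughout and check directly that its requirement $(j_2-x)/(n-j_2)$ is maximized at $x=0$ with value $j_2/(n-j_2)=d/(n-d)$; the last interval is handled symmetrically with the left endpoint. These cases therefore yield the same bound $d/(n-d)$, so the conclusion is uniform. Finally, since $M\subseteq\overline M$, the statement for $M$ follows immediately.
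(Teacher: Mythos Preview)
Your proof is correct and follows the same overall strategy as the paper: locate $x$ in an interval $[j_i,j_{i+1}]$ of length $d\le\mei(S)$ and show that one of the two endpoint solutions $\eps$-dominates $(x,n-x)$. The only difference is in how the final verification is carried out. Where you use a monotonicity/crossover argument (and must then treat the extreme intervals $j_i=0$ and $j_{i+1}=n$ separately to avoid division by zero), the paper uses a one-line additive trick: since $d\le\mei(S)$ gives $(1+\eps)(n-d)\ge n$, and since $n-d=j_i+(n-j_{i+1})$ while $n=x+(n-x)$, one obtains
\[
(1+\eps)j_i+(1+\eps)(n-j_{i+1})\ \ge\ x+(n-x),
\]
so at least one summand on the left is at least its counterpart on the right. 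This sum argument never divides by $j_i$ or $n-j_{i+1}$, so it handles all intervals uniformly and avoids both the crossover computation and the degenerate-case analysis.
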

\begin{proof}
Let $J$ be the set of the first objective values of $S$. More specifically, let $J=\{j_1,\dots,j_{|J|}\}$ with $j_a<j_b$ for $a < b$. From the assumption, we know $j_1=0$ and $j_{|J|}=n$. Consider any $a\in[1..|J|-1]$ and let $v_1=(j_a,n-j_a)$ and $v_2=(j_{a+1},n-j_{a+1})$. For any $v=(r,n-r)$ with $r \in [j_a,j_{a+1}]$, we have 
\begin{align*}
(1+\eps)(n-(j_{a+1}-j_a))&={}\left(1+\frac{\mei(S)}{n-\mei(S)}\right)(n-(j_{a+1}-j_a))\\
&={}\frac{n(n-(j_{a+1}-j_a))}{n-\mei(S)} \ge n.
\end{align*}
Hence we know that $(1+\eps)j_a\ge r$ or $(1+\eps)(n-j_{a+1})\ge n-r$. Thus $v$ is $\eps$-dominated by one of $v_1$ and $v_2$. 
\end{proof}

Similar to $\mei_{\opt}$, we define an optimal value for $\eps$. From Definition~\ref{def:eps}, it is not difficult to see that the smaller $\eps$, the better $S$ approximates the set $W$. 
Let $A(M,\eps)$ be the collection of all $S$ that are an $\eps$-approximation of $M$. Then, for $N\in\N_{\ge2}$, let 
\begin{align*}
\eps_{\opt}(N)=\inf\{\eps > 0 \mid \exists S\in A(M,\eps), |S|=N, (0,n)\in S, (n,0)\in S\}.
\end{align*}
Obviously, this is the smallest $\eps$ so that a population of size $N$ covering the two extremal points can be an $\eps$ approximation of~$M$.
 
From Lemma~\ref{lem:eps}, we easily obtain the following upper bound of the optimal $\eps$. 
\begin{corollary}\label{cor:epsupper}
For all $N\in\N_{\ge2}$, we have $\eps_{\opt}(N)\le \mei_{\opt}(N)/(n-\mei_{\opt}(N))$.
\end{corollary}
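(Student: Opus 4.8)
The plan is to feed a set witnessing $\mei_{\opt}(N)$ into Lemma~\ref{lem:eps} and then reconcile the two slightly different side constraints appearing in the definitions of $\mei_{\opt}$ and $\eps_{\opt}$. Throughout I work in the meaningful regime $3\le N\le n+1$; the boundary case $N=2$ is treated separately at the end. First I would abbreviate $\delta:=\mei_{\opt}(N)$ and $\eps_0:=\delta/(n-\delta)$, and choose a set $S^{*}\subseteq M$ with $|S^{*}|\le N$, $(0,n),(n,0)\in S^{*}$, and $\mei(S^{*})=\delta$, which exists by the definition of $\mei_{\opt}(N)$. Since $\mei(S^{*})=\delta$, the parameter that Lemma~\ref{lem:eps} attaches to $S^{*}$ is precisely $\eps_0$, so the lemma directly gives that $S^{*}$ is an $\eps_0$-approximation of $\overline{M}$, and hence of $M$; that is, $S^{*}\in A(M,\eps_0)$.

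The only remaining gap is bookkeeping: the infimum defining $\eps_{\opt}(N)$ ranges over sets of cardinality exactly $N$, whereas $S^{*}$ only satisfies $|S^{*}|\le N$. I would close this by extending $S^{*}$ to a set $S^{**}\subseteq M$ with $|S^{**}|=N$, which is possible because $N\le n+1=|M|$, simply by adjoining arbitrary further front points. Being an $\eps_0$-approximation is a covering property that is preserved under adding points, so $S^{**}\in A(M,\eps_0)$ as well, and it still contains the two extremal points. Consequently $\eps_0$ lies in the set over which $\eps_{\opt}(N)$ takes its infimum, which yields $\eps_{\opt}(N)\le\eps_0=\mei_{\opt}(N)/(n-\mei_{\opt}(N))$. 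Note also that $\eps_0$ is genuinely positive and finite here, since $\delta=\lceil n/(N-1)\rceil$ satisfies $1\le\delta\le\lceil n/2\rceil<n$ for $N\ge 3$ by Lemma~\ref{lem:optimalmei}.

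I do not expect any real difficulty beyond this cardinality reconciliation; the essential content is the single invocation of Lemma~\ref{lem:eps}. The one genuine edge case is $N=2$, where Lemma~\ref{lem:optimalmei} gives $\mei_{\opt}(2)=n$, so the denominator $n-\mei_{\opt}(N)$ vanishes and the right-hand side must be read as $+\infty$. This is consistent with the claim, because a size-two set forced to equal $\{(0,n),(n,0)\}$ cannot $\eps$-dominate the central front point $(\lceil n/2\rceil,\, n-\lceil n/2\rceil)$ for any finite $\eps$ (both of its coordinates are strictly positive, yet each extremal point has one vanishing coordinate); hence the defining set of $\eps_{\opt}(2)$ is empty, $\eps_{\opt}(2)=+\infty$, and the inequality holds with the usual conventions.
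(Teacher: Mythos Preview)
Your proposal is correct and follows essentially the same approach as the paper, which simply states that the corollary follows from Lemma~\ref{lem:eps} without further argument. You are more careful than the paper in reconciling the $|S|\le N$ versus $|S|=N$ constraints in the two definitions and in treating the degenerate case $N=2$, but the core idea---pick a witness for $\mei_{\opt}(N)$ and invoke Lemma~\ref{lem:eps}---is identical.
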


Now we establish a lower bound for $\eps$ w.r.t. $\mei$, and also a lower bound for the optimal $\eps$ as its corollary.
\begin{lemma}\label{lem:epslow}
Let $S$ with $(0,n),(n,0)\in S$ be an $\eps$-approximation for ${M}$. Then $\eps \ge  (\mei(S)-1)/(n-\mei(S))$.
\end{lemma}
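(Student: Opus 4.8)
The plan is to prove the contrapositive: I will show that if $\eps < (\mei(S)-1)/(n-\mei(S))$, then some point of $M$ is $\eps$-dominated by no element of $S$, so $S$ cannot be an $\eps$-approximation. Write $g = \mei(S)$ and let the maximal empty interval be realized between two consecutive first-objective values $p := j_a$ and $q := j_{a+1} = p+g$ of $S$, so that $u_1 := (p,n-p)$ and $u_2 := (q,n-q)$ lie in $S$ and no point of $S$ has first coordinate strictly between $p$ and $q$. I may assume $g < n$, since otherwise $S=\{(0,n),(n,0)\}$, which $\eps$-dominates no interior front point for any finite $\eps$; then the hypothesis is void and the claim is vacuous. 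In particular $n-g\ge 1$.

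First I would establish the key reduction: for any front point $v=(r,n-r)$ with $p<r<q$, the only elements of $S$ that can $\eps$-dominate $v$ are $u_1$ and $u_2$. Indeed, for a point $(s,n-s)\in S$ with $s\le p$ the second-objective constraint $(1+\eps)(n-s)\ge n-r$ is automatic (since $n-s\ge n-p>n-r$), so such a point $\eps$-dominates $v$ exactly when $(1+\eps)s\ge r$, which is easiest for the largest admissible $s$, namely $s=p$; symmetrically, among points with $s\ge q$ the first-objective constraint is automatic and $u_2$ is the easiest case. Hence $v$ is $\eps$-dominated by $S$ if and only if $(1+\eps)p\ge r$ or $(1+\eps)(n-q)\ge n-r$.

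The core of the argument is then a short length estimate. A point $v=(r,n-r)$ with $p<r<q$ is \emph{not} $\eps$-dominated precisely when $(1+\eps)p < r < n-(1+\eps)(n-q)$. This open interval has length
\[
\big(n-(1+\eps)(n-q)\big)-(1+\eps)p = n-(1+\eps)\big((n-q)+p\big)=n-(1+\eps)(n-g),
\]
using $q-p=g$. If $\eps<(g-1)/(n-g)$, then $(1+\eps)(n-g)<(n-g)+(g-1)=n-1$, so the interval has length strictly greater than $1$ and therefore contains an integer $r$. For this $r$ one checks $p<r<q$ (from $(1+\eps)p\ge p$ and $(1+\eps)(n-q)\ge n-q$), so $(r,n-r)$ is a genuine point of $M$ that, by the reduction above, no point of $S$ $\eps$-dominates. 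This contradicts $S$ being an $\eps$-approximation of $M$, which proves $\eps\ge(g-1)/(n-g)$.

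The main obstacle is pinning down the reduction to the two gap endpoints $u_1,u_2$ and arranging that the boundary cases $p=0$ and $q=n$ (where one endpoint can never satisfy one objective constraint) fall out of the same uniform length computation rather than needing separate treatment; this works because $(1+\eps)\cdot 0 = 0$ simply makes the corresponding interval endpoint trivial, leaving the length formula unchanged. Beyond that, I would only need to invoke the elementary fact that an open interval of length exceeding $1$ always contains an integer, which is exactly what converts the length bound into an actual uncovered front point.
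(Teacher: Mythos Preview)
Your proof is correct. Both your argument and the paper's share the same opening reduction: for a front point $v$ strictly inside a gap of $S$, only the two gap endpoints can possibly $\eps$-dominate it, so the question reduces to whether $(1+\eps)p\ge r$ or $(1+\eps)(n-q)\ge n-r$.

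Where you diverge is in how you produce an uncovered integer point. The paper works directly: for each gap $[i_1,i_2]$ it locates the real ``balance point'' $a=i_1 n/(n-(i_2-i_1))$ at which the two min-arguments coincide, then splits into cases according to whether $a$ is an integer; when it is not, it plugs in both $\lfloor a\rfloor$ and $\lfloor a\rfloor+1$, takes $\max$ of the resulting minima, and simplifies to $\eps\ge(i_2-i_1-1)/(n-(i_2-i_1))$. This is done for every gap and then maximized over gaps. You instead argue by contrapositive on the maximal gap only and replace the explicit test-point optimization by a length count: the uncovered open interval $((1+\eps)p,\,n-(1+\eps)(n-q))$ has length $n-(1+\eps)(n-g)$, which exceeds $1$ once $\eps<(g-1)/(n-g)$, and an open interval of length $>1$ must contain an integer. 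Your route is shorter and sidesteps the integer/non-integer case split; the paper's route yields the per-gap inequality as an intermediate byproduct, which is marginally more information but not used elsewhere. Your handling of the degenerate case $g=n$ (where $S=\{(0,n),(n,0)\}$ and the hypothesis is vacuous) is also fine and is not addressed explicitly in the paper.
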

\begin{proof}
Let $v_1=(i_1,n-i_1) \in S$ and $v_2=(i_2,n-i_2)\in S, i_1<i_2$ be any two neighboring points in $S$. Let $v_3=(i_3,n-i_3)$ with $i_3\in[i_1+1..i_2-1]$. If there exists an $i'\ge i_2$ such that $v'=(i',n-i') \succeq_{\eps} v_3$, then $(1+\eps)(n-i') \ge n-i_3$. Since $i'\ge i_2$, we have $n-i_2 \ge n-i'$ and thus $(1+\eps)(n-i_2) \ge n-i_3$. Together with $i_2 > i_3$, we know $v_2 \succeq_{\eps} v_3$. Analogously, if there exists an $\eps$ and $i'\le i_1$ such that $v'=(i',n-i')\succeq_{\eps} v_3$, then $v_1\succeq_{\eps} v_3$. Hence one of $v_1$ and $v_2$ $\eps$-dominates $v_3$.

If $v_2\succeq_\eps v_3$, then $(n-i_2)(1+\eps) \ge n-i_3$, that is, $\eps \ge (n-i_3)/(n-i_2)-1$. If $v_1\succeq_\eps v_3$, then $(1+\eps)i_1 \ge i_3$, that is, $\eps \ge i_3/i_1-1$. Hence, 
\begin{align}
\eps \ge \min\left\{\frac{n-i_3}{n-i_2}-1,\frac{i_3}{i_1}-1\right\}=\min\left\{\frac{i_2-i_3}{n-i_2},\frac{i_3-i_1}{i_1}\right\}.
\label{eq:minv}
\end{align}
Let $a=i_1n/(n-(i_2-i_1))$. While not important for the proof, we note that this is the value for $i_3$ that renders the two arguments of the minimum equal and thus maximizes~\eqref{eq:minv}.
If $a$ is an integer, then from taking $i_3=a$ we obtain $\eps \ge (i_2-i_1)/(n-(i_2-i_1))$.
If $a$ is not an integer, let $a' := \lfloor a \rfloor$ and $\delta:=a-a'$. Exploiting~\eqref{eq:minv} with $i_3=a'$ and $i_3=a'+1$, we obtain
\begin{align*}
\eps &\ge{} \max\left\{\min\left\{\frac{i_2-a'}{n-i_2},\frac{a'-i_1}{i_1}\right\},\min\left\{\frac{i_2-(a'+1)}{n-i_2},\frac{a'+1-i_1}{i_1}\right\}\right\}\\
&={}\max\left\{\frac{a'-i_1}{i_1},\frac{i_2-(a'+1)}{n-i_2}\right\}\\
&={} \max\left\{\frac{i_2-i_1}{n-(i_2-i_1)}-\frac{\delta}{i_1},\frac{i_2-i_1}{n-(i_2-i_1)}-\frac{1-\delta}{n-i_2}\right\}\\
&={} \frac{i_2-i_1}{n-(i_2-i_1)} - \min\left\{ \frac{\delta}{i_1}, \frac{1-\delta}{n-i_2} \right\}\ge \frac{i_2-i_1-1}{n-(i_2-i_1)},
\end{align*}
where the last inequality uses that the maximal value of $\min\{\delta/i_1, (1-\delta)/(n-i_2)\}$ is $1/(n-(i_2-i_1))$ when $\delta=i_1/(n-(i_2-i_1))$.

Let $J=\{j_1,\dots,j_{|J|}\}$ be the set of first objective values in $S$ and ordered such that $j_b<j_c$ for all $b < c$. We have just shown that $\eps \ge \frac{j_{b+1}-j_b -1}{n-(j_{b+1}-j_b)}$ for all $b\in[1..|S|-1]$. Hence
\begin{align}
\eps \ge \max_{b\in[1..|J|-1]}\left\{\frac{j_{b+1}-j_b -1}{n-(j_{b+1}-j_b)}\right\}=\frac{\mei(S)-1}{n-\mei(S)},
\label{eq:mineps}
\end{align}
where we use the fact that $(j_{b+1}-j_b -1)/(n-(j_{b+1}-j_b))$ increases as $j_{b+1}-j_b$ increases.
\end{proof}

\begin{corollary}\label{cor:epslow}
For all $N\in\N_{\ge2}$, we have $\eps_{\opt}(N)\ge (\mei_{\opt}(N)-1)/(n-\mei_{\opt}(N))$.
\end{corollary}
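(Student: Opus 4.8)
The plan is to derive this corollary directly from Lemma~\ref{lem:epslow} by combining it with the feasibility of any approximating set in the definition of $\mei_{\opt}(N)$ and the monotonicity of the bound. The key observation is that the quantity $(\mei(S)-1)/(n-\mei(S))$ appearing in Lemma~\ref{lem:epslow} is increasing in $\mei(S)$ (this monotonicity was already recorded at the end of the proof of Lemma~\ref{lem:epslow}), so replacing $\mei(S)$ by its smallest possible value $\mei_{\opt}(N)$ can only decrease the bound.

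Concretely, I would argue as follows. Fix any $\eps>0$ belonging to the set over which the infimum defining $\eps_{\opt}(N)$ is taken. By definition there then exists a set $S$ with $|S|=N$, $(0,n)\in S$, $(n,0)\in S$, and $S\in A(M,\eps)$, i.e.\ $S$ is an $\eps$-approximation of $M$. First I would apply Lemma~\ref{lem:epslow} to this $S$ to get $\eps \ge (\mei(S)-1)/(n-\mei(S))$. Next I would note that $S$ is admissible in the minimization defining $\mei_{\opt}(N)$ (it satisfies $S\subseteq M$, $|S|=N\le N$, and contains both extremal points), whence $\mei(S)\ge \mei_{\opt}(N)$. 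Finally, using that $x\mapsto (x-1)/(n-x)$ is increasing on $[0,n)$, I would conclude
\begin{align*}
\eps \ge \frac{\mei(S)-1}{n-\mei(S)} \ge \frac{\mei_{\opt}(N)-1}{n-\mei_{\opt}(N)}.
\end{align*}
Since this lower bound holds for \emph{every} $\eps$ in the set, it also holds for the infimum, giving $\eps_{\opt}(N)\ge (\mei_{\opt}(N)-1)/(n-\mei_{\opt}(N))$.

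I do not expect a serious obstacle here; the corollary is an essentially immediate consequence of Lemma~\ref{lem:epslow}. The only two points requiring a moment of care are that the infimum preserves a lower bound valid for all admissible $\eps$ (so no separate limiting argument is needed), and the degenerate case $N=2$, where the only admissible set is $\{(0,n),(n,0)\}$ with $\mei=n$, so that the denominator $n-\mei_{\opt}(N)$ vanishes. In that case the set of finite $\eps$ admitting such an approximation is empty (the zero coordinates of the two extremal points prevent them from $\eps$-dominating any interior front point for any finite $\eps$), so both sides equal $+\infty$ and the inequality holds in the extended reals; I would mention this briefly and treat $N\ge 3$ as the main regime, where $\mei_{\opt}(N)=\lceil n/(N-1)\rceil<n$ keeps the expression well defined.
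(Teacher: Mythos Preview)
Your argument is correct and is precisely the natural derivation the paper has in mind; the paper states Corollary~\ref{cor:epslow} without proof, treating it as an immediate consequence of Lemma~\ref{lem:epslow} via exactly the monotonicity observation you use. Your care with the $N=2$ boundary case and with passing the lower bound through the infimum is appropriate and more explicit than the paper itself.
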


\subsection{Hypervolume}\label{sec:hv}

In this subsection, we discuss the classic hypervolume indicator, the most common measure for how well a set of solutions approximates the Pareto front, and its relation to the MEI. 

\subsubsection{Background and Definition}

The hypervolume indicator, denoted by HV in the remainder, was first proposed by Zitzler and Thiele~\cite{ZitzlerT98}. It is the most intensively used measure for approximation quality in evolutionary multi-objective optimizations~\cite{ShangIHP21}. Here ``hypervolume'' refers to the size of the space covered by the solution set (with respect to a reference point). Different from $\mei$, $\eps$-dominance, and generational distance measures~\cite{VanVeldhuizenL98,BosmanT03,CoelloR04}, the HV does not require a prior knowledge or estimate about the Pareto front, which is a huge advantage in practical applications.

\begin{definition}[Hypervolume (HV)]
Let $m>0$ be the number of objectives, and let $W=\{u \mid u \in \R^m\}$ be the whole objective vector set for a given problem. Let $r\in \R^m$ be such that $u \succeq r$ for all $u\in W$. Let $\mathcal{L}$ denote the Lebesgue measure in~$\R^m$. Then the \emph{hypervolume} of a subset $J\subseteq W$ is defined as 
\[
\hv(J,r)=\mathcal{L}\left(\bigcup_{u\in J} H_{u,r} \right),
\] 
where $H_{u,r}=\prod_{i=1}^m[r_i,u_i]$ is the hyper-rectangle defined by the corners $u$ and $r$.
\label{def:hv}
\end{definition}

In addition to its wide usage in practice, there are also some theoretical works discussing the HV. Brockhoff, Friedrich, and Neumann~\cite{BrockhoffFN08} proposed the \emph{$(\mu+1)$-simple indicator-based evolutionary algorithm}, $(\mu+1)$-SIBEA, and proved its efficient expected runtime of $O(n^2\log n)$ for reaching an $\eps$-approximation of the Pareto front of \lfe, for any $\eps \in \R_{>0}$. In contrast, the GSEMO with high probability needs at least exponential time to achieve an $\eps$-approximation~\cite{HorobaN08}. Besides this result,~\cite{BrockhoffFN08} also proved that the $(\mu+1)$-SIBEA with $\mu \ge n+1$ computes the full Pareto front of the \lotz function in an expected runtime of $O(\mu n^2)$. 

Nguyen, Sutton, and Neumann~\cite{NguyenSN15} proved that the expected runtime of the $(\mu+1)$-SIBEA, $\mu \ge n+1$, to compute the full Pareto front (hence to reach the maximum HV value possible with this population size) for the \omm problem is $O(\mu n \log n)$. They also proved that starting from some initial population and using a population size of only $\mu=\sqrt n+1$, the $(\mu+1)$-SIBEA with high probability takes at least exponential time to reach the maximum HV value (attainable with this population size). For the \lotz problem, they showed the following results. If $\mu \ge n+1$, then the algorithm can reach the maximum HV value (and thus compute the full Pareto front) in expected time $O(\mu n^2)$. If $1< \mu < n/3$, the expected time to reach the maximum hypervolume (and thus approximate the Pareto front best in this measure) is $O(\mu^3 n^3)$. 

While the work just discussed has shown that the $(\mu+1)$-SIBEA with small population sizes has difficulties to compute the optimal approximation of the Pareto front of \omm, this algorithm can easily compute good approximations. Doerr, Gao, and Neumann~\cite{DoerrGN16} proved that $(\mu+1)$-SIBEA with $\mu\le \sqrt n$ in expected time  $O(\mu n\log n)$ reaches a population with HV below the value for the full Pareto front by only an additive term of at most $2n^2/\mu$. 

We note that in addition to the above runtime analyses, other theoretical works on structural aspects of the HV exist,
e.g., \cite{AugerBB10,AugerBBZ12,BringmannF13ai,FriedrichNT15}. We will not discuss them here in full detail as our main focus in this work is the analysis of runtimes of evolutionary algorithms. 

However, we note that for discrete problems, the optimal HV value for a population with size less than the finite Pareto front size is not well understood. Emmerich, Deutz, Beume~\cite[Section~3]{EmmerichDB07} proved that the HV value for approximating the continuous Pareto front $\{(x,1-x)\mid x\in[0,1]\}$ with $\mu$ points plus the two extremal points $(1,0),(0,1)$ is maximized if the $\mu$ points are evenly spaced between two extremal points. Beume et al.~\cite[Lemma~1]{BeumeFLPV09} proved this result for the line $\{(x,\beta-x) \mid x\in[x_{\min},x_{\max}]\}$ for any $\beta > 0$ and $x_{\min},x_{\max} \in \R$ with $x_{\min} \le x_{\max}$. Auger et al.~\cite{AugerBBZ12} further extended it to the line $\{(x,\beta-\alpha x) \mid x\in[x_{\min},x_{\max}]\}$ with $0\le x_{\min} < x_{\max} \le \beta/\alpha$ for any $\alpha,\beta > 0$. They also considered the case that the reference point is dominated by at least one Pareto optimal point, but not necessarily all. 
Nguyen, Sutton, and Neumann~\cite{NguyenSN15} discussed two special cases, namely $\mu=\sqrt n+1$ for \omm and $\mu \in (1,n/3)$ for \lotz. Both implicitly assume $n/(\mu-1)$ to be an integer. 
Doerr, Gao, and Neumann~\cite{DoerrGN16} compared the HV of the computed approximations with the HV of the full Pareto front and thus did not require an analysis of the best HV achievable with fixed small population sizes. 
In summary, for the discrete \omm problem, the optimal HV for $\mu$ points is not well analyzed. Hence, we will also bound the optimal HV value when we discuss the relation of MEI and HV in the following Section~\ref{sssec:meihv}.

\subsubsection{Relation of Maximal Empty Intervals and Hypervolumes}\label{sssec:meihv}

We now analyze the relation between the MEI and the $\hv$. As we shall see, the fact that the hypervolume depends on the sum of the squares of the lengths of the empty intervals disallows a simple relation between the two measures. However, we shall also see that the optimal approximations for the two measures are very similar, in particular, an optimal approximation with respect to the $\hv$ has the minimum possible empty interval size.

We first give an exact formula for the \emph{hypervolume}. Let $r=(r_1,r_2)$ with $r_1,r_2\le0$ be the reference point so that it is dominated by any $x\in\{0,1\}^n$ w.r.t. \omm. 
Let $S$ be a subset of the Pareto front $M$ of \omm with $(0,n),(n,0)\in S$, and $J$ be the set of the first objective values of $S$. More specifically, let $J=\{j_1,\dots,j_{|J|}\}$ with $j_a<j_b$ for $a < b$. Then we know $j_1=0$ and $j_{|J|}=n$. Since each Pareto front point $(a,b)$ satisfies $a+b=n$, we have
\begin{equation}
\begin{split}
\hv(f(S),r)&={}\left(-r_1n-r_2n+r_1r_2+\tfrac12 n^2\right)-\sum_{a=2}^{|J|} \tfrac12 (j_a-j_{a-1})^2\\
&=:{}A-\sum_{a=2}^{|J|} \tfrac12 (j_a-j_{a-1})^2,
\label{eq:hvsr}
\end{split}
\end{equation}
where we note that $A=-r_1n-r_2n+r_1r_2+\tfrac12 n^2$ is the HV of the continuous front $\{(x,n-x)\mid x\in[0,n]\}$. We point out that the $A$-part was wrongly computed as $A=-r_1n-r_2n+2r_1r_2+\tfrac12 n^2$ in~\cite[Lemma~2]{DoerrGN16}. This renders the calculations of the expressions $I_H(F)$ and $I_H(P)$ there incorrect, but this does not influence the correctness of Lemma~2 in \cite{DoerrGN16} as the extra $r_1r_2$ term cancels out in $I_H(F)-I_H(P)$.

Now we estimate $\hv(f(S),r)$.
\begin{lemma}\label{lem:mei2hv}
Let $S$ be a subset of the Pareto front $M$ of \omm with $(0,n),(n,0)\in S$, and let $r=(r_1,r_2)$ with $r_1,r_2\le 0$ be the reference point. Let $A=-r_1n-r_2n+r_1r_2+\frac12 n^2$. Then 
\begin{align*}
\hv(f(S),r)\in \left[A- \frac{(|S|-1)(\mei(S))^2}{2}, A-\frac{n}{2(|S|-1)}\right].
\end{align*} 
\end{lemma}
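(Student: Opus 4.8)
The plan is to work directly from the closed-form expression for the hypervolume in Eq.~\eqref{eq:hvsr}, which already reduces the whole problem to controlling a sum of squared gaps. Write $d_a := j_a - j_{a-1}$ for $a = 2,\dots,|J|$, where $J = \{j_1,\dots,j_{|J|}\}$ is the sorted set of first-objective values of $S$. Since distinct Pareto front points of \omm have distinct first-objective values, we have $|J| = |S|$, so there are exactly $|S|-1$ gaps $d_a$, each satisfying $0 < d_a \le \mei(S)$; and because $j_1 = 0$, $j_{|J|} = n$, they telescope to $\sum_{a=2}^{|J|} d_a = n$. By Eq.~\eqref{eq:hvsr}, proving the lemma amounts to sandwiching $T := \sum_{a=2}^{|J|} d_a^2$ between a suitable lower bound and $(|S|-1)(\mei(S))^2$.

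For the left endpoint (a lower bound on $\hv$, i.e.\ an upper bound on $T$) I would simply bound every term by the largest gap: $d_a^2 \le (\mei(S))^2$ for each of the $|S|-1$ indices, hence $T \le (|S|-1)(\mei(S))^2$. Substituting into Eq.~\eqref{eq:hvsr} gives $\hv(f(S),r) \ge A - \tfrac12 (|S|-1)(\mei(S))^2$, which is exactly the left endpoint.

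For the right endpoint (an upper bound on $\hv$, i.e.\ a lower bound on $T$) I would apply the Cauchy--Schwarz inequality (equivalently the power-mean/QM--AM inequality) to the $|S|-1$ gaps: $T = \sum d_a^2 \ge (\sum d_a)^2/(|S|-1) = n^2/(|S|-1)$. Substituting into Eq.~\eqref{eq:hvsr} yields the sharper bound $\hv(f(S),r) \le A - n^2/(2(|S|-1))$, and since $n \ge 2$ gives $n^2 \ge n$, this immediately implies the stated $\hv(f(S),r) \le A - n/(2(|S|-1))$.

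I do not expect any genuine analytic obstacle here: both directions are one-line consequences of elementary inequalities. The only points requiring care are bookkeeping, namely confirming that the number of gaps is exactly $|S|-1$ (using distinctness of first-objective values together with $(0,n),(n,0)\in S$, which also guarantees $|S|\ge 2$ so that no division by zero occurs), and recording that the equality case of Cauchy--Schwarz, all gaps equal to $n/(|S|-1)$, coincides with the known even-spacing optimum for the continuous front. The latter both cross-checks correctness and explains why the right endpoint is essentially tight, being attained up to integrality by the evenly spread population.
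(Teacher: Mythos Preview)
Your proposal is correct and follows essentially the same route as the paper: bound each squared gap by $(\mei(S))^2$ for the lower endpoint, and apply Cauchy--Schwarz to the gaps (with total $n$) for the upper endpoint, both plugged into Eq.~\eqref{eq:hvsr}. The only cosmetic difference is that the paper carries $|J|\le |S|$ throughout rather than arguing $|J|=|S|$ upfront; your observation that $S\subseteq M$ forces distinct first-objective values makes the bookkeeping slightly cleaner, and your explicit note that Cauchy--Schwarz actually yields the sharper $A-n^2/(2(|S|-1))$ before weakening to the stated bound is a nice addition.
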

\begin{proof}
We use the same $J$ as defined above. Then we have $|J|\le |S|$. By the definition of $\mei(S)$, we have
\begin{equation}
\label{eq:hvl}
\begin{split}
\sum_{a=2}^{|J|} &{}\tfrac12 (j_a-j_{a-1})^2\le \sum_{a=2}^{|J|} \tfrac12 (\mei(S))^2 \\
&\le{} \sum_{a=2}^{|S|} \tfrac12 (\mei(S))^2=\tfrac12 (|S|-1)(\mei(S))^2.
\end{split}
\end{equation}
From the Cauchy–Bunyakovsky–Schwarz inequality, we have
\begin{equation}
\label{eq:hvu}
\begin{split}
\sum_{a=2}^{|J|} &{}\tfrac12 (j_a-j_{a-1})^2=\frac1{2(|J|-1)} \left(\sum_{a=2}^{|J|}  (j_a-j_{a-1})^2\right)\left(\sum_{a=2}^{|J|} 1^2\right) \\
&\ge{} \frac1{2(|J|-1)} \left(\sum_{a=2}^{|J|} j_a-j_{a-1} \right)^2 =\frac1{2(|J|-1)}(j_{|J|}-j_1)^2\\
&={} \frac{n}{2(|J|-1)} \ge  \frac{n}{2(|S|-1)},
\end{split}
\end{equation}
where the last equality uses $j_{|J|}=n$ and $j_1=0$. 

From (\ref{eq:hvsr}) to (\ref{eq:hvu}), this lemma is proved.
\end{proof}

Similar to $\mei_{\opt}$, we define an optimal value for the hypervolume. From Definition~\ref{def:hv}, it is not difficult to see that the larger $\hv$, the better $S$ approximates the Pareto front. 
For $N\in\N_{\ge 2}$, we further define
\begin{align*}
\hv_{\opt}(N,r):=\sup\{\hv(f(S),r) \mid S\subseteq M, |S|\le N, (0,n) \in S, (n,0) \in S\}.
\end{align*}
Obviously, this is the largest $\hv$ that a population of size $N$ containing the two extremal points can obtain. Now we estimate $\hv_{\opt}(N,r)$.
\begin{lemma}
Let $r=(r_1,r_2)$ with $r_1,r_2\le 0$ be the reference point, and let $A=-r_1n-r_2n+r_1r_2+\frac12 n^2$. For all $N\in \N_{\ge 2}$, we have 
\begin{align*}
\hv_{\opt}(N,r)\in \left[A-\frac{(N-1)(\mei_{\opt}(N))^2}{2}, A-\frac{n}{2(N-1)}\right].
\end{align*}
\label{lem:optimalhv}
\end{lemma}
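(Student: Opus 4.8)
The plan is to derive both endpoints of the claimed interval directly from the per-set estimate of Lemma~\ref{lem:mei2hv}, which for every admissible $S$ sandwiches $\hv(f(S),r)$ between $A - \tfrac12(|S|-1)(\mei(S))^2$ and $A - \tfrac{n}{2(|S|-1)}$. Since the Pareto front $M$ is finite, the supremum defining $\hv_{\opt}(N,r)$ ranges over finitely many sets and is in fact attained; this lets me treat the two bounds independently, proving the upper endpoint by a uniform estimate valid for all competitors and the lower endpoint by exhibiting a single good witness.

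For the upper bound, I would note that every admissible $S$ satisfies $\hv(f(S),r) \le A - \tfrac{n}{2(|S|-1)}$ by the right-hand side of Lemma~\ref{lem:mei2hv}. The map $k \mapsto A - \tfrac{n}{2(k-1)}$ is increasing in $k$ (as $n>0$), and since $|S| \le N$ this gives $A - \tfrac{n}{2(|S|-1)} \le A - \tfrac{n}{2(N-1)}$. As the chain $\hv(f(S),r) \le A - \tfrac{n}{2(N-1)}$ holds for every admissible $S$, taking the supremum yields $\hv_{\opt}(N,r) \le A - \tfrac{n}{2(N-1)}$.

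For the lower bound, I would reuse the explicit evenly-spaced set $S'$ built in the proof of Lemma~\ref{lem:optimalmei}, namely the points whose first coordinates are $j_i = \min\{(i-1)\lceil n/(N-1)\rceil, n\}$. This set is admissible ($|S'| \le N$, and it contains both extremal points), and it has $\mei(S') = \lceil n/(N-1)\rceil = \mei_{\opt}(N)$ by Lemma~\ref{lem:optimalmei}. Applying the left-hand side of Lemma~\ref{lem:mei2hv} to $S'$ gives $\hv(f(S'),r) \ge A - \tfrac12(|S'|-1)(\mei(S'))^2$. Bounding $|S'|-1 \le N-1$ and substituting $\mei(S') = \mei_{\opt}(N)$ turns this into $\hv(f(S'),r) \ge A - \tfrac12(N-1)(\mei_{\opt}(N))^2$, and since $\hv_{\opt}(N,r) \ge \hv(f(S'),r)$ the lower endpoint follows.

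The two estimates together establish the stated interval. The only point requiring care is the bookkeeping of directions: for the upper endpoint one needs the estimate to hold for all $S$ and to be monotone in the cardinality, so that the extreme case $|S| = N$ dominates; whereas for the lower endpoint the stated form (with $N-1$ and $\mei_{\opt}(N)$ replacing $|S'|-1$ and $\mei(S')$) is a deliberate weakening of what the witness already delivers, so no sharp analysis is needed. I expect no genuine obstacle here, as the substantive work was already carried out in Lemma~\ref{lem:mei2hv} and in the construction inside Lemma~\ref{lem:optimalmei}.
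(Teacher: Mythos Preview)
Your proof is correct and follows essentially the same route as the paper: the upper bound is obtained identically, and for the lower bound both you and the paper use the same evenly-spaced witness $S'$ from Lemma~\ref{lem:optimalmei}. The only cosmetic difference is that you invoke the left-hand inequality of Lemma~\ref{lem:mei2hv} on $S'$ directly, whereas the paper re-derives that same estimate inline from~\eqref{eq:hvsr}; the content is the same.
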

\begin{proof}
For any $S\subseteq M$ with $|S|\le N, (0,n) \in S$, and $(n,0) \in S$, Lemma~\ref{lem:mei2hv} shows that 
\begin{align*}
\hv(S,r)\le A-\frac{n}{2(|S|-1)} \le A-\frac{n}{2(N-1)},
\end{align*}
where the second inequality uses $|S|\le N$. Thus $\hv_{\opt}(N,r)\le A-n/(2(N-1))$.

Now we show the lower bound of $\hv_{\opt}(N,r)$. Let $j_i=\min\{(i-1)\left\lceil n/(N-1)\right\rceil,n\}, i=1,\dots, N$, and we know that $j_1=1$ and $j_N=n$. Consider the set $S' = \{(j_i,n-j_j) \mid i \in [1..N]\}$. 
It is not difficult to see that
\begin{align*}
\min\left\{i\left\lceil\tfrac{n}{N-1}\right\rceil,n\right\}-\min\left\{(i-1)\left\lceil\tfrac{n}{N-1}\right\rceil,n\right\}\le \left\lceil\tfrac{n}{N-1}\right\rceil.
\end{align*}
Hence, from (\ref{eq:hvsr}) we have
\begin{align*}
\hv(S',r)&={}A-\sum_{a=2}^N \tfrac12 (j_a-j_{a-1})^2 \\
&\ge{} A-\sum_{a=2}^N \tfrac12\left(\left\lceil\tfrac{n}{N-1}\right\rceil\right)^2 = A-\tfrac12(N-1)\left(\left\lceil\tfrac{n}{N-1}\right\rceil\right)^2.
\end{align*}
By the definition of $\hv_{\opt}(N,r)$ and noting $\mei_{\opt}(N)=\left\lceil n/(N-1)\right\rceil$ from Lemma~\ref{lem:optimalmei}, the lower bound is proven.
\end{proof}
Different from Corollary~\ref{cor:epsupper} and~\ref{cor:epslow}, we do not see a simple relation between $\mei_{\opt}(N)$ and $\hv_{\opt}(N,r)$ here. However, we note that the construction of $S'$ here is the same as the one in the proof of Lemma~\ref{lem:optimalmei}. Hence, the lower bound of the $\hv_{\opt}(N,r)$ is reached when $\mei_{\opt}(N)$ is reached. 

Since our MEI measure is more intuitive and crucial in our proofs, and it can be easily transferred to the $\eps$-approximation and HV measure that are utilized in the community, we will use MEI as the approximation measure to see how well an MOEA approximates the Pareto front in the remaining of this work.

\section{Difficulties for the \NSGA to Approximate the Pareto Front}
\label{sec:nsgaii}

In this section, we show that the traditional way how the \NSGA selects the next population, namely by relying on the initial crowding distance, can lead to suboptimal approximations of the Pareto front. To this aim, we analyze by mathematical means the results of the selection from two different combined parent and offspring populations. These examples demonstrate quite clearly that the traditional selection can lead to unwanted results. We note that these results do not prove completely that the \NSGA has difficulties to find good approximations since we do not know how often the \NSGA enters exactly these situations. Unfortunately the population dynamics of the \NSGA are too complicated for a full proof. Our experimental results in Section~\ref{sec:exp}, however, suggest that the phenomena we observe in these synthetic situations (in particular, the first one) do show up. 

We start by regarding the optimal-looking situation that the combined parent and offspring population for each point on the Pareto front contains exactly one individual. Hence by removing the individual corresponding to (essentially) every second point on the Pareto front, one could obtain a very good approximation of the front. Surprisingly, the \NSGA does much worse. Since all solutions apart from the two extremal ones have the same crowding distance, the \NSGA removes a random set of $N$ out of these $2N-2$ inner solutions. As we show now, with high probability this creates an empty interval on the Pareto front of length $\Theta(\log n)$. 

\begin{lemma}
Let $n\ge 7$ be odd. Consider using the \NSGA to optimize the \omm function with problem size $n$. Let the population size be $N=(n+1)/2$. Suppose that in a certain generation $t\ge 0$, the combined parent and offspring population $R_t$ fully covers the Pareto front, that is, $f(R_t)=\{(0,n),(1,n-1),\dots,(n,0)\}$. Then with probability at least $1 - \exp(-\Omega(n^{1/2}))$, we have $\mei(P_{t+1},f_1) \ge \lfloor \frac 13 \ln n \rfloor$. On the positive side, $E[\mei(P_{t+1},f_1)]\le \log_{3/2} n+O(1)$ and $\Pr[\mei(P_{t+1},f_1) \ge c\log_{3/2} n] \le n^{1-c}$ for any constant $c > 1$.
\label{lem:bigMEI}
\end{lemma}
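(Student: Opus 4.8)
The plan is to strip away the algorithmic layer, reduce the statement to a clean question about the longest run in a random subset, and then prove the three claims separately: the two upper bounds by a union bound, and the high-probability lower bound by a disjoint-blocks argument. First I would analyze the selection step. Since every point of $\{0,1\}^n$ is Pareto optimal for \omm, the combined population $R_t$ forms a single front $F_1$, so $i^\ast=1$ and the \NSGA keeps the $N$ individuals of largest crowding distance. The two extremal objective vectors $(0,n)$ and $(n,0)$ receive infinite crowding distance and always survive; each of the $n-1$ inner points (with $f_1$-values $1,\dots,n-1$) has both neighbours at distance one in each objective and hence the same finite crowding distance $4/n$. Consequently the surviving inner points are a uniformly random $(N-2)$-subset of the $n-1$ inner points, equivalently a uniformly random $R:=(n+1)/2$-subset of $\{1,\dots,n-1\}$ is removed. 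Writing $L$ for the length of the longest run of consecutive removed inner positions, and noting that the always-present extremes flank every run, we obtain the identity $\mei(P_{t+1},f_1)=L+1$. All three claims are thus statements about $L$.

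For the upper bounds I would use the elementary estimate that, for any $\ell$ consecutive inner positions, $\Pr[\text{all removed}]=\prod_{i=0}^{\ell-1}\frac{R-i}{(n-1)-i}$. Each factor is at most $R/(n-1)=\frac{n+1}{2(n-1)}$, which equals $2/3$ at $n=7$ and decreases to $1/2$; hence $\Pr[\text{all removed}]\le(2/3)^\ell$ for all $n\ge 7$, which is exactly why the base $3/2$ appears. A union bound over the at most $n$ windows gives $\Pr[L\ge\ell]\le n\,(2/3)^\ell$. Plugging in $\ell=\lceil c\log_{3/2}n\rceil-1$ yields $\Pr[\mei(P_{t+1},f_1)\ge c\log_{3/2}n]=O(n^{1-c})$, and summing $\Pr[L\ge\ell]\le\min\{1,n(2/3)^\ell\}$ over $\ell$, split at $\ell_0=\log_{3/2}n$, gives $E[L]\le\log_{3/2}n+O(1)$ and hence $E[\mei(P_{t+1},f_1)]\le\log_{3/2}n+O(1)$.

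For the lower bound I would partition $\{1,\dots,n-1\}$ into $B=\lfloor\epsilon n/\ln n\rfloor$ disjoint blocks of length $\ell:=\lfloor\tfrac13\ln n\rfloor-1$, for a small constant $\epsilon>0$; a single fully removed block already forces $L\ge\ell$, i.e.\ $\mei(P_{t+1},f_1)\ge\lfloor\tfrac13\ln n\rfloor$. Revealing the removal pattern block by block, the key observation is that, conditioned on any pattern on the earlier blocks, the removals among the remaining positions still form a uniform subset of the appropriate size. As long as no earlier block has been fully removed, the budget used so far is at most $B\ell$, a small enough fraction of $R=(n+1)/2$ that each factor $\frac{(\text{remaining removals})-i}{(\text{remaining positions})-i}$ stays at least $\tfrac12-o(1)$; hence the conditional probability that the current block is fully removed is at least $q:=(\tfrac12-o(1))^\ell\ge n^{-\gamma}$ with $\gamma=\tfrac{\ln 2}{3}+o(1)<\tfrac12$ (this is precisely where the factor $\tfrac13$ buys slack). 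Multiplying the conditional survival probabilities gives $\Pr[\text{no block fully removed}]\le(1-q)^B\le\exp(-qB)\le\exp(-\Omega(n^{1-\gamma}/\ln n))\le\exp(-\Omega(n^{1/2}))$.

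The main obstacle is the dependence introduced by sampling without replacement in the lower bound: the block events are not independent, so one cannot simply multiply unconditional probabilities. The sequential-revelation argument above handles this elementarily by controlling the used budget, the tension being that $B$ must be large for many chances yet $B\ell$ must stay a small fraction of $R$ to keep $q$ polynomially large; the generous constant $\tfrac13$ and the modest target $\exp(-\Omega(n^{1/2}))$ leave ample slack. Alternatively, one can note that the removal indicators form a negatively associated family and invoke the factorization inequality $\Pr[\bigcap_j\{Y_j=0\}]\le\prod_j\Pr[Y_j=0]$ directly. Both routes give a stretched-exponential bound comfortably stronger than claimed; a plain second-moment bound, by contrast, only yields a polynomially small failure probability and is therefore insufficient.
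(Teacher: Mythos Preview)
Your proposal is correct. The reduction to a uniformly random removal of $N$ of the $n-1$ inner positions and the identification $\mei(P_{t+1},f_1)=L+1$ are exactly right, and your upper-bound argument (the per-window bound $(2/3)^\ell$ followed by a union bound and a tail sum) is the same as the paper's. One small discrepancy: the lemma asserts $\Pr[\mei\ge c\log_{3/2}n]\le n^{1-c}$ without a hidden constant, whereas your bookkeeping via $L$ yields $O(n^{1-c})$; this is only an off-by-one in the window length and is easy to absorb.

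For the high-probability lower bound you take a genuinely different route from the paper. The paper handles the without-replacement dependence by a two-phase coupling: it first marks each inner individual independently with probability $N/(2(n-1))\approx 1/4$ (so the ``block fully removed'' events are exactly independent and each has probability at least $4^{-k}\ge n^{-\ln 4/3}$), then uses a Chernoff bound to show that with probability $1-\exp(-\Omega(n))$ fewer than $N$ individuals were marked, so the second phase only removes more and any empty block persists. You instead keep the hypergeometric model and control the conditional block-success probability by sequential revelation (or, alternatively, by negative association of the removal indicators). Your approach is more elementary in that it avoids the auxiliary Bernoulli process and the Chernoff step, at the price of having to track the ``remaining budget'' lower bound; note that the factors are really $\tfrac12-O(\epsilon)-o(1)$ rather than $\tfrac12-o(1)$, since $\epsilon$ is a fixed constant, but this still gives $\gamma=\tfrac{\ln 2}{3}+O(\epsilon)<\tfrac12$ and hence the claimed $\exp(-\Omega(n^{1/2}))$ failure probability. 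The paper's Poissonization trick, on the other hand, makes the independence exact and yields the slightly sharper exponent $n^{1-\ln 4/3}/\ln n$ without having to choose or tune $\epsilon$.
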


We note that in the above result, we did not try to optimize the implicit constants. The proof of the upper bound on the length of the longest empty interval is simple union bound argument. The proof of the lower bound needs some care to deal with the stochastic dependencies among the intervals. We overcome these difficulties by only regarding a set of disjoint intervals and with a two-stage sampling process such that the first stage treats the regarded intervals in an independent manner. 

\begin{proof}
We note that $R_t$ has size at least $n+1$, since it covers the Pareto front, which has a size of $n+1$. From our assumption $N = (n+1)/2$, we thus conclude that every point in the Pareto front has exactly one corresponding individual in $R_t$. Hence, except for the individuals $0^n$ and $1^n$ that have infinite crowding distance, all other individuals have the equal crowding distance of $4/n$. Consequently, the original \NSGA survival selection will randomly select $N$ individuals from the $2N-2 = n-1$ inner individuals to be removed. 

To prove the logarithmic lower bound on the expectation, we argue as follows. Let $k = \lfloor \frac 13 \ln n \rfloor$. Let $M \subseteq [1..n-k]$ such that $|M| = \lceil n / \ln n \rceil$ and for any two $m_1, m_2 \in M$, we have $|m_1 - m_2| \ge k$ or $m_1 = m_2$ (note that such a set exists since $k|M| \le n-1$ by definition of $k$ and $M$). Consequently, for different $m \in M$, the intervals $I_m = [m .. m+k-1]$ are disjoint subsets of $[1..n-1]$. 

To cope with the stochastic dependencies, we regard a particular way to sample the $N$ individuals to be removed. In a first phase, we select each individual $x$ with $f_1(x) \in [1..n-1]$ with probability $N / (2(n-1))$. This defines a random set $X$ of individuals with expected cardinality $E[|X|] = N/2$. We remove these individuals from the combined parent and offspring population. In a second phase, if $|X| < N$, which is very likely, then we repeat removing random individuals $x$ with $f_1(x) \in [1..n-1]$ until we have removed a total of $N$ individuals. If $|X| > N$, then we repeat adding random previously removed individuals until we have removed only $N$ individuals.

We now prove that with high probability, the first phase ends with (i)~an interval~$I$ of length $k$ on the Pareto front which is not covered by the population, and with (ii)~$|X| \le N$. Apparently, then $I$ also in the final population is not covered. Denote by $\tilde A_{m,k}$ the probability that all individuals $x$ with $f_1(x) \in I_m$ are removed in the first phase. We have
\begin{align*}
\Pr[\tilde A_{m,k}] &= \left(\tfrac{N}{2(n-1)}\right)^k \ge 4^{-k} \ge n^{-\ln(4)/3}.
\end{align*}
By construction, the events $\tilde{A}_{m,k}$, $m \in M$, are independent. Hence, using the estimate $1+r \le \exp(r)$ valid for all $r \in \R$, we estimate
\begin{align*}
\Pr[\forall m \in M : \neg \tilde A_{m,k}] &\le (1 - n^{-\ln(4)/3})^{|M|} \le \exp\left(- n^{-\ln(4)/3}\frac{n}{\ln n}\right)\\
&=\exp\left(- \frac{n^{1-\ln(4)/3}}{\ln n}\right) = \exp(-\Omega(n^{1/2})). 
\end{align*}
Finally, we estimate the probability that $|X| > N$. We note that $|X|$ can be written as a sum of $n-1$ independent binary random variables. Hence by the multiplicative Chernoff bound (Theorem~1.10.1 in~\cite{Doerr20bookchapter}), we have
\[
\Pr[|X| > N] \le \Pr[|X| \ge 2 E[X]] \le \exp(-E[X] / 3) \le \exp(-n/6).
\] 
This proves that with probability at least $1 - \exp(-n^{1-\ln(4)/3} / \ln n) - \exp(-n/6) = 1 - \exp(-\Omega(n^{1/2}))$, after the  selection phase there is an interval of length $k$ of the Pareto front such that none of its points is covered by the new population $P_{t+1}$.  This also implies $E[\mei(P_{t+1},f_1)] \ge (1 - \exp(-\Omega(n^{-1/2}))) k = \Omega(\log n)$.

We now turn to the upper bounds. Let $k\in[1..N]$ and $m\in[1..n-k]$, and let $A_{k,m}$ be the event that all individuals with $f_1$ value in $I_m = [m..m+k-1]$ are selected for removal. Then 
\begin{align*}
\Pr[A_{m,k}]=\frac{\binom{2N-2-k}{N-k}}{\binom{2N-2}{N}}=\frac{\frac{(2N-2-k)!}{(N-k)!(N-2)!}}{\frac{(2N-2)!}{N!(N-2)!}} = \frac{N(N-1)\cdots(N-k+1)}{(2N-2)(2N-3)\cdots(2N-1-k)}.
\end{align*}
It is not difficult to see that if $A_{m,k}$ happens for some $m$, then $\mei(P_t,f_1)\ge k$. Hence, by a union bound over all possible $m$, we obtain
\begin{align*}
\Pr[\mei(P_{t+1},f_1) \ge k] &\le{} (n-k)\Pr[A_{m,k}] = \frac{(n-k)N(N-1)\cdots(N-k+1)}{(2N-2)(2N-3)\cdots(2N-1-k)}\\
&\le{} (n-k) \left(\frac{N}{2N-2}\right)^k \le n\left(\frac{1}{2-2/N}\right)^k \le n\left(\frac23\right)^k,
\end{align*}
where the last inequality uses $n\ge 7$ and thus $N=(n+1)/2\ge 4$. Hence for any constant $c>1$, we know that 
\begin{align*}
\Pr[\mei(P_{t+1},f_1) \ge c\log_{3/2}n]\le n\left(\tfrac23\right)^{c\log_{3/2}n}=\tfrac1{n^{c-1}}.
\end{align*}
For the expectation, we compute
\begin{align*}
E&{}[\mei(P_{t+1,f_1})] =\sum_{k=1}^{+\infty} \Pr[\mei(P_{t+1},f_1) \ge k] \\
&\le{}\sum_{k=1}^{\lfloor\log_{3/2}n\rfloor} \Pr[\mei(P_{t+1},f_1) \ge k] + \sum_{k=\lfloor\log_{3/2}n\rfloor+1}^{+\infty} n\left(\tfrac23\right)^k\\
&\le{} \lfloor\log_{3/2}n\rfloor + n\left(\tfrac23\right)^{\lfloor\log_{3/2}n\rfloor+1} \sum_{k=0}^{+\infty} \left(\tfrac23\right)^k \le \log_{3/2}n+O(1).
\qedhere
\end{align*}
\end{proof}

The example above shows that even in a perfectly symmetric situation, the \NSGA with high probability selects a new parent population with high irregularities and relatively large areas on the Pareto front that are not covered by the population. 

We now show that even more extreme examples can be constructed. This example builds on the same idea as the $11$-point example in Figure~2 in~\cite{KukkonenD06}, but is much more general (working for arbitrary large population sizes) and gives a more extreme result. We do not expect these to come up often in a regular run of the \NSGA, but they underline that the drawbacks of working with the initial crowding distance can be tremendous. 

\begin{lemma}
For all $n \in 3\N$, there is a combined parent and offspring population $R$ with $0^n, 1^n \in R$ and the following two properties.
\begin{itemize}
\item The population $P'$ selected by the traditional \NSGA satisfies $\mei(P',f_1) = \frac 13 n+2$.
\item There is a population $P'' \subseteq R$ with $|P''|=N$ and $0^n, 1^n \in P''$ such that $\mei(P'',f_1) \le 4$.
\end{itemize}
\label{lem:bigMEI2}
\end{lemma}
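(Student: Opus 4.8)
The plan is to exhibit an explicit population $R$ that is dense everywhere (so it covers the front up to isolated points), but whose crowding distances make the traditional \NSGA delete one long contiguous block. Write $n = 3k$ and set $N = k+1$, so that $|R| = 2N = 2k+2$ (any split into two halves of size $N$ then realises $R$ as $P_t\cup Q_t$). I would put into $R$ exactly one individual for each of the following $f_1$-values (equivalently, numbers of ones): the two extremes $0^n$ and $1^n$; a left \emph{sparse} part consisting of every second value $0,2,4,\dots,p_L$; a central \emph{dense} block consisting of all values $p_L+1,p_L+2,\dots,p_R-1$; and a right sparse part $p_R,p_R+2,\dots,n$, where $p_L$ is even and $p_R = p_L+(k+2)$. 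A short count then shows that the block contains $p_R-p_L-1=k+1$ individuals and the two sparse parts together contain $k+1$ individuals, so $|R|=2N$ and $0^n,1^n\in R$; the parity choice makes both sparse parts terminate exactly at the extremes.

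Second, I would run the selection. Since every point of $\{0,1\}^n$ is Pareto optimal for \omm, $R$ forms a single front, so the \NSGA computes $\cDis$ on all of $R$ and keeps the $N$ individuals of largest $\cDis$. On the \omm front the crowding distance of an individual whose two neighbours in $R$ lie at distances $d_L$ and $d_R$ equals $2(d_L+d_R)/n$ (the two objectives contribute equally, the range being $n$), while the extremes have $\cDis=+\infty$. Hence every block individual (neighbours at distance $1$ on both sides) has $\cDis=4/n$, whereas every sparse individual has $\cDis\ge 6/n$ (a boundary sparse point facing the block) or $8/n$, and the extremes have $\cDis=+\infty$. As the block has exactly $N$ individuals, all strictly below every other $\cDis$ value, the removed set is unambiguously the whole block, with no tie at the cut-off. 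The surviving population $P'$ then has only gaps of length $2$ inside the sparse parts and the single gap $p_R-p_L=k+2$ where the block stood, giving $\mei(P',f_1)=\tfrac13 n+2$.

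Third, for the good population I would use that $R$ itself satisfies $\mei(R,f_1)=2$ and $|R|=2N$. Since $N=k+1\ge \lceil n/4\rceil+1$, there is enough budget to thin $R$ down to $N$ points that still contain $0^n,1^n$ and leave all gaps at most $4$ (for instance, starting from all of $R$ and greedily deleting points, never two consecutive, so each deletion merges two gaps of size $\le 2$ into one of size $\le 4$); this yields $P''$ with $\mei(P'',f_1)\le 4$. The hard part — and the only step needing real care — is the bookkeeping of the second paragraph: verifying simultaneously that the counts force $|\text{block}|=|\text{survivors}|=N$, that the parity lets both sparse parts reach $0$ and $n$ at spacing $2$, and above all that the \emph{boundary} crowding distances are strictly above $4/n$, so that the deleted set is exactly the block and the resulting empty interval has length precisely $\tfrac13 n+2$ rather than merely at least that.
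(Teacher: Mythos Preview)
Your construction is essentially a parametrised version of the paper's: the paper takes the dense block at the left end (in your notation, $p_L=0$, so $f_1(R)=[0..k{+}1]\cup\{k{+}2,k{+}4,\dots,n\}$), whereas you allow it anywhere. The crowding-distance bookkeeping---block points at $4/n$, boundary sparse points at $6/n$, inner sparse points at $8/n$, extremes at $+\infty$, hence precisely the $N$ block points removed and $\mei(P',f_1)=k+2=\tfrac{n}{3}+2$---is correct and matches the paper's argument.

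The genuine gap is in your $P''$ construction. You claim one can delete $N=k+1$ interior points ``never two consecutive'', so every surviving gap is the union of at most two original gaps and hence at most~$4$. But $R$ has only $2N-2=2k$ interior points, and the largest set of pairwise non-adjacent indices among $2k$ in a row has size $k$, not $k+1$. Thus one pair of consecutive deletions is unavoidable; if that pair lands in a sparse part, three gaps of size~$2$ merge to~$6$, violating $\mei(P'')\le 4$. The paper deals with exactly this parity obstruction by explicitly placing the single unavoidable double-deletion inside the dense block, where three gaps of size~$1$ merge to~$3$: keep $0^n$, delete the next two, keep the next, and then strictly alternate. Your $R$ admits the same fix (put the extra deletion inside the block), but the ``never two consecutive'' justification as written does not go through and the budget inequality $N\ge\lceil n/4\rceil+1$ alone does not rescue it, since it ignores the constraint $P''\subseteq R$.
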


\begin{proof}
Let $R$ be such that $f_1(R) = [0..\frac 13n+1] \cup \{\frac 13 n + 2i \mid i \in [1..\frac 13 n]\}$ and $|R| = 2 (\frac 13 n + 1)$. Then for any two different $x,x'\in R$, we have $f_1(x) \neq f_1(x')$. By construction, $0^n, 1^n \in R$. We note that exactly those $x \in R$ with $f_1(x) \in [1..\frac 13n+1]$ have the smallest occurring crowding distance of $\frac 4 {n}$. Since these are $\frac 13n+1$ elements of $R$ and since $N = \frac 12 |R| = \frac 13n+1$ elements that have to be removed, they will all be removed in the selection step, leaving all points $(i,n-i)$ with $i \in [1..\frac 13n+1]$ uncovered by the selected population.  

Now let $P''$ be obtained from $R$ by keeping $0^n$, removing the individuals with $f_1$ values $1$ and $2$, keeping the individual with $f_1$ value $3$, and then alternatingly in order of increasing $f_1$ value deleting and keeping the individuals. This removes exactly half the individuals from $R$, but leaves $0^n$ and $1^n$ in $P''$. Apart from the interval $[0..3]$, the intervals in $f_1(P'')$ are all the union of two intervals in $f_1(R)$. Since $\mei(R,f_1) = 2$, this shows $\mei(P'',f_1) \le 4$. 
\end{proof}

\section{Working With the Current Crowding Distance}\label{sec:onthefly}

As already noted in~\cite{KukkonenD06} and then further quantified in the preceding section, the traditional way the \NSGA selects the next parent population can lead to unevenly distributed populations. The natural reason, discussed also in~\cite{KukkonenD06} and made again very visible in the proofs of Lemmas~\ref{lem:bigMEI} and~\ref{lem:bigMEI2}, is that the crowding distance is computed only once and then used for all removals of individuals. Consequently, it may happen that individuals are removed which, at the time of their removal, have a much larger crowding distance than at the start of the selection phase.

This phenomenon is heavily exploited in the construction of the very negative example in the proof of Lemma~\ref{lem:bigMEI2}. In this example, the individuals $x$ with $f_1(x) \in [1..\frac 13n]$ all have the smallest crowding distance of $4/n$, and thus are all removed in some arbitrary order. When the last individual is removed, its neighbors on the front have objective values $(0,n)$ and $(\frac 13 n+1,n - (\frac 13n + 1))$. Consequently, this individual at the moment of its removal has a crowding distance of 
$2/3+2/n$,
 which is (for large $n$) much larger than its initial value of $4/n$, but also much larger than the crowding distance of $8/n$, which most of the remaining individuals still have. This example shows very clearly the downside of working with the initial crowding distance and at the same time suggests to work with the current crowding distance instead. 

This is the road we will follow in this section. We first argue that there is an efficient implementation of the \NSGA that repeatedly selects an individual with smallest crowding distance. We then show that this selection mechanism leads to much more balanced selections for the \oneminmax benchmark. We prove that the modified \NSGA achieves an MEI value of at most $\frac{2n}{N-3}$, which is only by roughly a factor of $2$ larger than the optimal MEI of $(1 + o(1)) \frac n N$. Reaching such a balanced distribution is very efficient -- once the two extremal points are found (in time $O(Nn\log (n))$), it only takes additional time $O(Nn)$ to find a population satisfying the MEI guarantee above. From this point on, the MEI never increases above $\frac{2n}{N-3}$. 

\subsection{Implementation of the \NSGA Using the Current Crowding Distance}

It is immediately obvious that the computation of the crowding distance in the original \NSGA can be implemented in a straightforward manner that takes $O(m N \log N)$ time, where $m$ is the number of objectives and $N$ is the number of individuals for which the crowding distance is used as tie-breaker. 

For the \NSGA using the current crowding distance, some more thought is necessary. A na\"ive computation of the crowding distance for each removal could lead to a total effort of $\Theta(m N^2 \log N)$. Since the removal of one individual can change the crowding distance of the at most $2m$ individuals which are its neighbors in one of the sortings, one would hope that more efficient implementations exist, and this is indeed true. 

In fact, already in \cite{KukkonenD06} this problem was discussed and a solution via a priority queue and an additional data structure storing the neighbor relation was presented. Since the presentation in \cite{KukkonenD06} is very compact (and did not discuss how to implement a random tie-breaking among individuals with the same crowding distance), we now describe this approach in more detail.

%

For the following presentation, let us assume that we have only $m=2$ objectives. Let us assume that we have a set $R$ of individuals which pairwise are not comparable (none dominates the other) or have identical objective values. When optimizing $\oneminmax$, this set $R$ will be the combined parent and offspring population; in the general case it will be the front $F_{i^*}$. Let us call $r = |R|$ the size of this set. Let us assume that we want to remove some number $s$ of individuals from $R$, sequentially by repeatedly removing an individual with the smallest current crowding distance, breaking ties randomly.

Besides keeping the crowding distance updated (which can be done in constant time per removal, as we just saw), we also need to be able to efficiently find and remove an individual with the smallest (current) crowding distance, and moreover, a random one in case of ties. The detection and removal of an element with the smallest key calls for a \emph{priority queue}. Let us ignore for the moment the random tie-breaking and only discuss how to use a priority queue for the detection and removal of an individual with the smallest crowding distance. We recall that a priority queue is a data structure that stores \emph{items} together with a \emph{key}, a numerical value assigned to each item that describes the priority of the item. Standard priority queues support at least the following three operations: Adding new items to the queue, removing from the queue an item with the smallest key, and decreasing the key of an item in the queue. They do so with a time complexity that is only logarithmic in the current length of the queue. 

For our problem, at the start of the selection phase, we add all individuals with their crowding distance as key to the priority queue. We repeatedly remove individuals according to the following scheme: (i)~We find and remove from the priority queue an individual $x$ with smallest key. We also remove $x$ from $R$. (ii)~We find the up to four neighbors of $x$ in the two sortings of $R$ according to the two objectives, compute their crowding distance, and update their keys in the priority queue accordingly. This is not a decrease-key operation (but an increase of the key), but such an increase-key can be simulated by first decreasing the key to an artificially small value (smaller than all real values that can occur, here for example $-1$), then removing the item with smallest key (which is just this item), and then adding it with the new key to the queue. 

These two steps can be implemented in logarithmic time, since all operations of the priority queue take at most logarithmic time, except that we still need to provide an efficient way to find the neighbors of an element in the sortings. This is necessary to determine the up to four neighbors of $x$, but also to compute their crowding distance (which needs knowing their neighbors). To enable efficient computations of such neighbors, we use an additional data structure, namely for each objective a doubly-linked list that stores the current set $R$ sorted according to this objective. This list data structure must enable finding predecessors and successors (provided they exist) as well as the deletion of elements. Standard doubly-linked lists support these operations in constant time. We use this list in step~(ii) above to find the desired neighbors. We also need to delete the removed individual $x$ in step~(i) from this list. 

To allow finding individuals in the priority queue or the doubly-linked list, we need a helper data structure with pointers to the individuals in these data structures. This can be a simple array indexed by the initial set $R$. When $R$ is not suitable as index, we can initially build an array $a[1..|R|]$ storing $R$ and then use the indices of the elements of $R$ in $a$ as identifiers. With this approach, regardless of the structure of $R$, we can access individuals in any of our data structures in constant time. 

So far we have described how to repeatedly remove an element with the currently smallest crowding distance each in logarithmic time. To add the random tie-breaking, it suffices to give each individual $x$ a random second-priority key, e.g., a random number $r_x \in [0,1]$. The key of an individual $x$ used in the priority queue now is composed of the current crowding distance and this number $r_x$, where a key is smaller than another when its crowding distance is smaller or when the crowding distances are equal and the $r_x$ number is smaller (lexicographic order of the two parts of the key). With these extended keys, the individuals with the currently smallest crowding distance have the highest priority, and in the case of several such individuals, the number $r_x$ serves as a random tie-breaker. 

With this tie-breaking mechanism, we have now implemented a way to repeatedly remove an individual with the smallest current crowding distance, breaking ties randomly, in time logarithmic in $r$, the size of the set $R$. Overall, our selection using the current crowding distance takes the same time of $O(r \log r)$ as the selection based on the initial crowding distance. Note that both complexities are small compared to the non-dominated sorting step, which takes time quadratic in~$N$. 

Without going into details, we note that when the possible crowding distance values are known and they are not too numerous, say there is an upper bound of $S$ for their number, then using a bucket queue instead of a standard priority queue would give a complexity of order $O(S + r)$. This is slightly superior to the above runtime, e.g., for \oneminmax when $N = \Omega(n/\log n)$. 


\begin{algorithm}[!ht]
    \caption{\NSGA with the survival selection using the current crowding distance}
    \begin{algorithmic}[1]
    \STATE {Uniformly at random generate the initial population $P_0=\{x_1,x_2,\dots,x_N\}$ with $x_i\in\{0,1\}^n,i=1,2,\dots,N.$}
    \FOR{$t = 0, 1, 2, \dots$}
    \STATE {Generate the offspring population $Q_t$ with size $N$}
    \STATE {Use fast-non-dominated-sort() in~\cite{DebPAM02} 
    to divide $R_t$ into fronts $F_1,F_2,\dots$}
    \STATE {Find $i^* \ge 1$ such that $|\bigcup_{i=1}^{i^*-1}F_i| < N$ and $|\bigcup_{i=1}^{i^*}F_i| \ge N$}
    \STATE {Use Algorithm~\ref{alg:cDis} to separately calculate the crowding distance of each individual in $F_{1},\dots,F_{i^*}$ }
    \STATEx {\textsl{~~\%\%~Survival selection using the current crowding distance}}
    \WHILE{$|\bigcup_{i=1}^{i^*}F_{i}|\neq N$}\label{ste:flybegin}
    \STATE {Let $x$ be the individual with the smallest crowding distance in $F_{i^*}$, chosen at random in case of a tie}
    \STATE {Find four neighbors of $x$, two in the sorted list with respect to $f_1$ and two for $f_2$. Update the crowding distance of these four neighbors}
    \STATE {$F_{i^*} = F_{i^*} \setminus \{x\}$}
    \ENDWHILE\label{ste:flyend}
    \STATE {$P_{t+1}=\left(\bigcup_{i=1}^{i^*}F_i\right)$}
    \ENDFOR 
    \end{algorithmic}
    \label{alg:onthefly}
\end{algorithm}

\subsection{Runtime Analysis and Approximation Quality of the \NSGA with Current Crowding Distance}\label{subsec:approx}

We now conduct a mathematical analysis of the \NSGA with survival selection based on the current crowding distance. The following lemma shows that invididuals with at least a certain crowding distance will certainly enter the next generation. The key argument in this proof is an averaging argument based on the observation that the sum of the crowding distances of all individuals other than the ones with infinite crowding distance is at most $4$.

\begin{lemma}\label{lem:selection}
Let $N\ge 4$.
Consider using the \NSGA with survival selection based on the current crowding distance to optimize the \omm function with problem size $n$. Let $t_0$ be the first generation such that the two extreme points $0^n$ and $1^n$ are  in $P_{t_0}$. Let $t\ge t_0$. Consider the selection of the next population $P_{t+1}$ from $R_t$, which consists of $N$ times removing an individual with smallest current crowding distance, breaking ties in an arbitrary manner. Assume that at some stage of this removal process the individual $x$ has a crowding distance of $\frac{4}{N-3}$ or higher. Then $x \in P_{t+1}$. Also, $P_{t+1}$ surely contains the two extreme points.
\end{lemma}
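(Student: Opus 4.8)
The plan is to combine two facts: the two extremal individuals $0^n$ and $1^n$ always carry infinite crowding distance, so they are never candidates for removal, and the total crowding distance mass available among the non-extremal survivors is bounded. The lemma asserts that once the current crowding distance of an individual $x$ reaches $\frac{4}{N-3}$, $x$ is safe. I would prove this by showing that at every stage of the removal process, the individuals with current crowding distance strictly less than $\frac{4}{N-3}$ are sufficiently numerous that the algorithm can complete all its removals while only ever removing individuals of crowding distance below this threshold, so $x$ is never selected.

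First I would record the averaging observation advertised in the run-up to the statement: at any fixed stage, if we sort the current set $R$ by $f_1$ and look at the consecutive gaps, the two $f_1$-contributions to crowding distance telescope, and likewise for $f_2$. Hence the sum of the (finite) crowding distances over all individuals that are not the current $f_1$- or $f_2$-extremes is at most $4$ (two objectives, each contributing a telescoping sum bounded by the normalized total range $2$). This is the key quantitative input and it holds at \emph{every} stage, because after each removal we recompute the crowding distances of the affected neighbors, so the invariant ``total finite crowding distance $\le 4$'' is maintained throughout.

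Next I would turn this mass bound into a counting bound. Consider the moment just before some removal, and suppose for contradiction that the algorithm is forced to remove an individual $y$ with current crowding distance $\ge \frac{4}{N-3}$; since removals always pick a smallest-crowding-distance individual, this means \emph{every} remaining non-extremal individual has crowding distance $\ge \frac{4}{N-3}$ at that stage. Counting these individuals against the total mass of $4$ gives at most $(N-3)$ such individuals with finite crowding distance. Adding back the two extremal points $0^n,1^n$ (infinite crowding distance, hence present) gives at most $N-1$ individuals remaining, which is impossible because the target population size is $N$ and the process only stops once exactly $N$ individuals remain. This contradiction shows that as long as more than $N$ individuals remain, there is always a removable individual of crowding distance $< \frac{4}{N-3}$, so the algorithm never needs to touch $x$; and the same mass argument shows the two extremal points, with infinite crowding distance, are never smallest and hence always survive. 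I would close by noting $t \ge t_0$ guarantees $0^n,1^n \in P_t \subseteq R_t$ so both extremes are present from the start of this selection phase.

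The main obstacle is making the counting step fully rigorous across the dynamic process rather than at a single frozen snapshot: I must argue that the threshold $\frac{4}{N-3}$ protects $x$ not just at one stage but at every subsequent stage up to the end of the selection, even though removals can \emph{raise} the crowding distance of $x$'s neighbors and shift which individuals are extremal. The clean way around this is to observe that a crowding distance can only increase when a neighbor is removed, never decrease spontaneously, so once $x$ crosses the threshold it stays at or above it; combined with the invariant that some individual below the threshold always exists while more than $N$ remain, $x$ is never the selected minimum. Care is also needed so that the ``$-3$'' (rather than $-2$) in the denominator correctly accounts for the two infinite-distance extremes plus the slack needed to guarantee a strictly-below-threshold individual exists; I would verify the arithmetic $4/(N-3)$ against the bound ``at most $N-3$ finite individuals can all meet the threshold.''
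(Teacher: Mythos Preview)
Your approach is essentially the paper's: both rest on the telescoping observation that the finite crowding distances sum to at most~$4$, and then extract a bound on the minimum. The paper argues directly that $\min \le \text{sum}/|R^*| \le 4/(r-4) \le 4/(N-3)$; you recast the same inequality as a contradiction. Your monotonicity remark (removals can only raise crowding distances) is correct and useful for the ``once above threshold, always above'' step, which the paper leaves implicit.

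There is, however, a genuine gap in your counting. You assume exactly two individuals carry infinite crowding distance (``the two extremal points $0^n,1^n$''). In fact the crowding-distance routine assigns $+\infty$ to the first and last position in \emph{each} of the two sortings, four positions in all, and when $R_t$ contains duplicate copies of $0^n$ or $1^n$ (easily possible since $R_t=P_t\cup Q_t$ is a multiset and offspring may coincide with parents), different copies can occupy these four slots. With the correct count of up to four infinite-distance individuals, your contradiction dissolves: at most $N-3$ finite-distance individuals plus up to $4$ infinite-distance ones gives $|R|\le N+1$, which is entirely compatible with $|R|>N$ and yields no contradiction at the final removal step. The paper's direct route sidesteps this by using $|R^*|\ge r-4$ together with $r\ge N+1$, whence $r-4\ge N-3$. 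This is the true origin of the ``$-3$'' in the threshold $4/(N-3)$; it is $(N{+}1)-4$, not ``two infinite-distance extremes plus one unit of slack'' as you conjecture in your closing paragraph.
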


\begin{proof}
We consider first a single removal of an individual at some moment of the selection phase. Let $R$ be the set of individuals from the combined parent and offspring population that have not yet been removed. Note that $r := |R| > N$. Note also that, by definition of the crowding distance, there can be at most $4$ individuals with infinite crowding distance. Since $N \ge 4$, such individuals are never removed, and consequently, $R$ surely contains $0^n$ and $1^n$. Let $y_1, \dots, y_r$ be the sorting of $R$ by increasing $f_1$ value and $z_1, \dots, z_r$ be the sorting of $R$ by increasing $f_2$ value that are used for the computation of the crowding distance. 
For $x \in R$, let $i, j \in [1..r]$ such that $x = y_i = z_j$ (we assume here that individuals have unique identifiers, so they are distinguishable also when having the same genotype; consequently, these $i$ and $j$ are uniquely defined). From the definition of $\cDis(x)$, we know the following. If $\{i,j\} \cap \{1,r\} \neq \emptyset$, then $\cDis(x) = \infty$. Otherwise, $\cDis(x) = (f_1(y_{i+1}) - f_1(y_{i-1}) + f_2(z_{j+1}) - f_2(z_{j-1}))/n$. 

We compute
\begin{align*}
\sum_{i=2}^{r-1} f_1(y_{i+1}) - f_1(y_{i-1}) 
&= \sum_{i=2}^{r-1} f_1(y_{i+1}) - f_1(y_i) + f_1(y_i) - f_1(y_{i-1}) \\
&\le 2 \sum_{i=1}^{r-1} f_1(y_{i+1}) - f_1(y_i) \\
&= 2 (f_1(y_r) - f_1(y_1)) = 2 (n- 0) = 2 n,
\end{align*}
the latter by our insight that $R$ contains the two extremal individuals. An analogous estimate holds for $f_2$ and the $z_j$. This allows the estimate
\begin{align*}
\sum_{x \in R^*} n\cDis(x)
\le \sum_{i=1}^{r-1} f_1(y_{i+1}) - f_1(y_{i-1}) + \sum_{j=1}^{r-1} f_2(z_{j+1}) - f_2(z_{j-1})  \le 4n,
\end{align*} 
where we write $R^*$ for the individuals in $R$ having a finite crowding distance. Since $|R^*| \ge r - 4$, a simple averaging argument shows that there is an $x \in R^*$ with $\cDis(x) \le \frac{4}{r-4}$. Hence also the individual that is removed has a crowding distance of at most $\frac{4}{r-4} \le \frac{4}{N-3}$. 

Now looking at all removals in this selection step, we see that never an individual with crowding distance above $\frac{4}{N-3}$ is removed. 
\end{proof}

The result just shown easily implies that no large empty interval is created by the removal of a solution from $R_t$ in the selection phase. Slightly more involved arguments are necessary to argue that the MEI-value decreases relatively fast. It is not too difficult to see that if the set of $f_1$ values in $P_t$ contains a large empty interval (the same will then be true for $f_2$), then this interval can be reduced in length by at least one via the event that one of the individuals corresponding to the boundaries of the empty interval create an offspring ``inside the interval''. What needs more care is that we require such arguments for all large empty intervals in parallel. For this, we regard how an empty interval shrinks over a longer time. Since this shrinking is composed of independent small steps, we can use strong concentration arguments to show that an interval shrinks to the desired value in the desired time with very high probability. This admits a union bound argument to extend this result to all empty intervals. 

A slight technical challenge is to make precise what ``one interval'' means over the course of time (recall that in one iteration, we generate $N$ offspring and then remove $N$ individuals from $R_t$). We overcome this by regarding a half-integral point $i+0.5$ and the corresponding interval 
\begin{align*}
I_i = [\max\{f_1(x) \mid x \in P_t, f_1(x) \le i+0.5\}..\min\{f_1(x) \mid x\in P_t, f_1(x) \ge i+0.5\}].
\end{align*}
This definition is unambiguous. It gives room to some strange effects, which seem hard to avoid, nevertheless. Assume that $I_i = [i..b]$ for some $b$ sufficiently larger than $i$. Assume that there is a single individual $x$ with $f_1(x) = i$ in $P_t$. Assume that $x$ has an offspring $y$ with $f_1(y) = i+1$. If both $x$ and $y$ survive into the next generation, then $I_i$ has suddenly shrunk a lot to $[i..i+1]$. If $y$ survives but $x$ does not, then 
$I_i$ has changed considerably to $[a..i+1]$, where $a = \max\{f_1(x) \mid x \in P_t, f_1(x) < i\}$. Since in each iteration $N$ individuals are newly generated and then $N$ are removed, we do not see a way to define the empty intervals in a more stable manner. Nevertheless, our proof below will be such that it covers all these cases in a relatively uniform manner. 

We recall from Section~\ref{sec:pre} that \emph{fair selection} means that each individual of the parent population generates exactly one offspring and \emph{random selection} means that $N$ times an individual is chosen uniformly at random from the parent population to generate one offspring. 

\begin{lemma}
Let $N\ge 4$.
Consider using the \NSGA with fair or random parent selection, with one-bit or bit-wise mutation, and with survival selection using the current crowding distance, to optimize the \omm function with problem size $n$. Let $t_0$ be the first generation such that the two extreme points $0^n$ and $1^n$ are contained in $P_{t_0}$. Let $t_1 \ge t_0$ be the first generation such that $\mei(P_{t_1},f_1)\le \max\{\frac{2n}{N-3},1\} =: L$. Then $t_1 - t_0 = O(n)$, both in expectation and with probability $1-o(1)$. Also, for all $t > t_1$, we have $\mei(P_t,f_1) \le L$ with probability one.
\label{lem:approx}
\end{lemma}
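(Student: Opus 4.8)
The plan is to derive everything from Lemma~\ref{lem:selection} by translating crowding distances into gap lengths. For \omm with the two extreme points present, the two objective sortings are reverses of each other, so an individual whose neighbors in the $f_1$-sorting have $f_1$-values differing by $g$ has $\cDis = 2g/n$. Hence Lemma~\ref{lem:selection} reads: any individual whose two current neighbors are at $f_1$-distance at least $\frac{2n}{N-3}$ survives, and (from the averaging bound in its proof, where the removed individual always has $\cDis \le \frac{4}{N-3}$) every removed individual has, at the moment of its removal, its two neighbors at $f_1$-distance at most $\frac{2n}{N-3}$. Write $\beta := \frac{2n}{N-3}$, so $L = \max\{\beta,1\}$.

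\emph{Stability part (second statement).} I would first show the deterministic monotonicity $\mei(P_{t+1},f_1) \le \max\{\mei(P_t,f_1),\beta\}$. Since $R_t \supseteq P_t$ we have $\mei(R_t,f_1) \le \mei(P_t,f_1)$. Now consider any gap of $P_{t+1}$, i.e.\ two $f_1$-adjacent survivors $u<v$. If no point of $R_t$ lies strictly between them, the gap already existed in $R_t$ and has length $\le \mei(P_t,f_1)$. Otherwise some $R_t$-points between $u$ and $v$ were removed; looking at the \emph{last} of these to be removed, its two neighbors at that instant were exactly $u$ and $v$, so by the removal bound $v-u \le \beta$. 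This proves the monotonicity, and the stability statement ``$\mei(P_t,f_1) \le L$ for all $t>t_1$'' follows by a one-line induction using $L \ge \beta$; note this holds with probability one, since the two facts imported from Lemma~\ref{lem:selection} are independent of the random tie-breaking.

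\emph{Runtime part (first statement).} For each half-integer $i+0.5$, $i\in\{0,\dots,n-1\}$, I track the length $\ell_i(t)$ of the interval $I_i$ anchored at $i+0.5$; since $0^n,1^n\in P_t$ for $t\ge t_0$ this interval is well defined, and $\mei(P_t,f_1)=\max_i \ell_i(t)$. The same dichotomy as above shows each $\ell_i$ is non-increasing and absorbed once $\le\beta$. The drift comes from the following shrinkage claim: if $I_i(t)=[a,b]$ with $b-a>\beta$ and at least one offspring is created with $f_1$-value in $(a,b)$, then $\ell_i(t+1)\le \ell_i(t)-1$. I would prove this by contradiction: the only $R_t$-points in $(a,b)$ are such offspring, so if all of them were removed, the last one removed would have, at that instant, its left neighbor at $f_1\le a$ and its right neighbor at $f_1\ge b$, hence neighbor-distance $\ge b-a>\beta$, and Lemma~\ref{lem:selection} would force it to survive, a contradiction; therefore some point of $(a,b)$ survives into $P_{t+1}$, which breaks the interval and drops its length by at least one. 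It remains to lower-bound by a constant $p$ the probability of creating an offspring inside $(a,b)$: the parents at $a$ and $b$ produce an offspring at $a+1$ (flip a one) resp.\ $b-1$ (flip a zero) with probability at least $\frac{n-a}{en}$ resp.\ $\frac{b}{en}$ for bit-wise mutation (and $\frac{n-a}{n},\frac{b}{n}$ for one-bit), and since $(n-a)+b>n$ one of these is at least $\frac{1}{2e}$; for random selection one multiplies by the constant probability $1-(1-1/N)^N$ that the relevant parent is chosen at all.

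Finally I would aggregate. For fixed $i$, reaching $\ell_i\le\beta$ requires at most $\ell_i(t_0)\le n$ length-decreasing steps, each occurring with conditional probability $\ge p$ while $\ell_i>\beta$; hence the hitting time $\tau_i$ is stochastically dominated by the number of $\mathrm{Bernoulli}(p)$ trials needed for $n$ successes, and a Chernoff bound gives $\Pr[\tau_i-t_0>2n/p]\le e^{-\Omega(n)}$. A union bound over the $n$ anchors yields $t_1-t_0\le 2n/p = O(n)$ with probability $1-ne^{-\Omega(n)}=1-o(1)$, and integrating the exponential tail gives $E[t_1-t_0]=O(n)$. The main obstacle, and the reason for the half-integer anchoring, is that $I_i$ is not stable over a generation: when a boundary individual is removed, its endpoints can jump outward. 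I sidestep any need for endpoint monotonicity by reasoning only about lengths through the merge/inherit dichotomy and by proving the shrinkage claim through the survival guarantee of Lemma~\ref{lem:selection} rather than by tracking which individuals survive.
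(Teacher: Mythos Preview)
Your proposal is correct and follows the same overall architecture as the paper: anchor half-integers, prove a dichotomy (either the $P_{t+1}$-gap was already an $R_t$-gap, or the last-removed point bounds it by $\beta$), derive monotonicity and stability from it, then show a constant-probability one-step shrinkage of each anchored gap and finish with Chernoff plus a union bound.

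There is one small slip in the shrinkage step. From ``some point $c\in(a,b)$ survives into $P_{t+1}$'' you conclude that the interval ``drops its length by at least one''. But the survival of $c$ only pins \emph{one} endpoint of $I_i(t+1)$ (either $a'\ge c>a$ or $b'\le c<b$); the other endpoint could move outward if the boundary parent at $a$ or $b$ is itself removed, and your argument does not exclude this. The conclusion is still true, but to justify it you must reinvoke the dichotomy you already proved: the new gap $[a',b']$ either equals the $R_t$-gap around $i+0.5$, which has length $\le X_t-1$ because the offspring is in $R_t$, or it was created by a removal and hence has length $\le\beta<X_t$. Either way $\ell_i(t+1)\le\ell_i(t)-1$. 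This is exactly how the paper argues (it observes $Y_t\le X_t-1$ and then applies the stability claim), and once you see it this way your separate survival argument via Lemma~\ref{lem:selection} becomes redundant: the mere presence of the offspring in $R_t$ suffices, you do not need it to survive.
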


\begin{proof}
Let $i\in[0..n-1]$. For all $t\ge t_0$, let $X_t$ be the length of the empty interval in $f_1(P_t)$ containing $i+0.5$, that is, 
\begin{align*}
X_t=\min\{f_1(x) \mid x\in P_t, f_1(x) \ge i+0.5\}-\max\{f_1(x) \mid x \in P_t, f_1(x) \le i+0.5\},
\end{align*}
and let analogously $Y_t$ be the length of the empty interval in $f_1(R_t)$ that contains $i+0.5$. 

We first prove if for some $t \ge t_0$ and $M \ge L$ we have $Y_t \le M$, then we have $X_{t'} \le M$ for all $t' > t$ with probability one. It apparently suffices to regard one iteration, so assume by way of contradiction that $X_{t+1} > M$. Note that $P_{t+1}$ is obtained from $R_t$ by $N$ times removing an individual with smallest current crowding distance. Consider the removal step which extends the empty interval containing $i+0.5$ to a length larger than $M$. Let $[a..b]$ be the empty interval containing $i+0.5$ before this removal. To increase the size of this empty interval, the individual $x$ removed in this step must have $f_1(x) \in \{a, b\}$ and it must be the only individual with this $f_1$ value. Assume, without loss of generality, that $f_1(x) = a$. Removing $x$, by definition of the crowding distance, creates an empty interval of length exactly $n/2$ times the current crowding distance of $a$ (here the factor of $1/2$ stems from the fact that for an individual $x$ with unique $f_1$ and $f_2$ value, the contributions of the two objectives to the crowding distance are equal). By Lemma~\ref{lem:selection}, this current crowding distance of $x$ is at most $\frac{4}{N-3}$, contradicting our assumption that the removal of $a$ creates an empty interval of length larger than $L$. This proves our first claim. 

Since $X_t \ge Y_t$ for all $t \ge t_0$, this claim implies that $X_t\ge X_{t+1}$ for $X_{t}\ge L$, and therefore that once $X_t \le L$, we have $X_{t'} \le L$ for all $t' \ge t$. Consequently, for all $t > t_1$, we have $\mei(P_t,f_1) \le L$ with probability one.

It remains to show that $X_t$ is actually decreasing to at most $L$ over time. To this aim, we now consider the situation that $X_t>L$ for some $t \ge t_0$. Let $a$ and $b$ with $a\le b$ be the two ends of the interval in $f_1(P_t)$ that contains $i+0.5$. Let $x_a,x_b\in P_t$ respectively be individuals with $f_1$ values $a$ and $b$. The probability that mutation applied to $x_a$ creates an offspring $y$ with $f_1(y) = a+1$ is $\frac{n-a}{n}$ for one-bit mutation and $(1-\frac 1n)^{n-1} \frac {n-a}{n} \ge \frac {n-a}{en}$ for bit-wise mutation. Likewise, the probability that $x_b$ mutates into a $y$ with $f_1(y) = b-1$ is $\frac bn$ for one-bit mutation and at least $\frac {b}{en}$ for bit-wise mutation. Since $a \le b$, we have $(n-a)+b \ge n$, that is, for at least one of $x_a$ and $x_b$ from which a $y$ with $f_1(y)\in \{a+1,b-1\}$ is mutated, such probability is at least $\frac {1}{2e}$. The probability that this individual is chosen as parent at least once in this iteration is one for fair parent selection and $(1 - (1 - \frac 1N)^N) \ge 1 - \frac 1e$ for random parent selection. Consequently, the probability that at least one individual with $f_1$ value in $\{a+1,b-1\}$ is created in this iteration is at least $\frac{1}{2e} (1 - \frac 1e) =: p$.
%
%

Let us assume that this positive event has happened. Then the interval in $f_1(R_t)$ containing $i+0.5$ has length $Y_t \le X_t - 1$ (note that, depending on the value of $i$ and the outcome of the offspring generation, this interval can be any of $[a..a+1]$, $[a..b-1]$, $[a+1..b-1]$, $[a+1..b]$, and $[b-1..b]$, but all arguments below hold for any of these cases). By our first claim, we now have $X_{t+1} \le X_t - 1$ with probability one. This argument shows that in any iteration starting with $X_t > L$, regardless of what happened in previous iterations, we have a probability of at least $p$ of reducing $X_t$ by at least one. 

Now we analyze the probability of the event $A$ that $X_t$ drops to $L$ or less in $T = \lceil 2 (1/p) n \rceil$ generations. We consider a random variable $Z=\sum_{i=1}^{T} Z_i$, where $Z_1,\dots,Z_{T}$ are independent Bernoulli random variables with success probability of~$p$. Then $X_{t_0}-X_{t_0+T} = \sum_{t=t_0}^{T-1} (X_{t} - X_{t+1})$ stochastically dominates $Z':=\min\{Z,X_{t_0}-L\}$. Since $E[Z]\ge2n$, applying the multiplicative Chernoff bound (see, e.g.,~\cite[Theorem~1.10.7]{Doerr20bookchapter}), we have 
\begin{align*}
\Pr\left[Z \le X_{t_0}-L\right] \le \Pr\left[Z \le n-L\right] \le \Pr[Z \le n] \le \exp(-n/4).
\end{align*}
Hence 
\begin{align*}
\Pr[A] \ge \Pr\left[X_{t_0}-X_{t_0+T} \ge X_{t_0}-L \right] \ge \Pr\left[Z' \ge X_{t_0}-L\right] \ge 1-\exp(-n/4).
\end{align*}

Note that the above discussed $X_t$ is corresponding to one given $i\in[0..n-1]$. A union bound over all $i \in [0..n-1]$ gives that for any generation after the $(t_0+T)$-th generation, with probability at least 
\begin{align*}
1-n\exp(-n/4)
\end{align*}
all empty intervals in the population will have the length at most $L$, that is, we have $\mei(P_t,f_1)\le L$ for all $t \ge t_0 + T$. This proves the claimed bound with high probability.

For the claimed bound on the expectation, we note that we can repeat our argument above since we did not make any particular assumptions on the initial state. Consequently, the probability that $\mei(P_{t_0+\lambda T},f_1) > L$ is at most $(n \exp(-n/4))^\lambda$. This immediately implies that that the expected time to have all empty intervals of length at most $L$ is at most $t_0 + O(1) T = t_0 + O(n)$, see, e.g., Corollary~1.6.2 in~\cite{Doerr20bookchapter}.
\end{proof}

It remains to analyze the time it takes to have the two extreme points $0^n$ and $1^n$ in the population.
This analysis is very similar to the analysis of the original \NSGA with population size large enough that the full Pareto front is found in~\cite[Theorems~2 and~6]{ZhengLD22}. 
Since the proof below also applies to the original \NSGA, we formulate the following result for both algorithms.

\begin{lemma}
Consider using the \NSGA in the classic version or using the current crowding distance  (Algorithm~\ref{alg:nsgaii} or~\ref{alg:onthefly}) with one of the following six ways to generate the offspring, namely, applying fair selection, random selection, or binary tournament selection and applying one-bit mutation or standard bit-wise mutation. Then after an expected number of $O(n\log n)$ iterations, that is, an expected number of $O(Nn\log n)$ fitness evaluations, the two extreme points $0^n$ and $1^n$ are contained in the population.
\label{lem:twoextreme}
\end{lemma}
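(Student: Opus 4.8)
The plan is to show that the two extreme points $0^n$ and $1^n$ are found quickly by tracking the best $f_1$-value (equivalently, the number of zeros) present in the population and arguing it reaches $n$ in the claimed time, with a symmetric argument for $1^n$. The key structural observation I would rely on is that, for \omm, a solution that maximizes $f_1$ among all individuals in $R_t$ always has infinite crowding distance (it is the extreme point of the $f_1$-sorting), and hence by the selection mechanism of both algorithms it is never removed. This means the current best $f_1$-value in the population is monotonically non-decreasing over time, which is precisely what lets a drift/multiplicative-weights argument go through cleanly.

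\textbf{First} I would set up the potential. Let $a_t := \max\{f_1(x) \mid x \in P_t\}$ denote the largest number of zeros present in generation $t$. By the monotonicity just noted, $a_{t+1}\ge a_t$ with probability one: any individual attaining the maximum $f_1$-value in $R_t = P_t \cup Q_t$ lies at the boundary of the $f_1$-sorting, so it has infinite crowding distance and survives selection (this holds for the classic \NSGA by step~\ref{ste:final front} of Algorithm~\ref{alg:nsgaii}, and for the current-crowding-distance variant by Lemma~\ref{lem:selection}, which guarantees the extreme points always survive). It therefore suffices to bound the expected number of iterations until $a_t$ reaches $n$.

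\textbf{Next} I would estimate the per-iteration improvement probability. Suppose $a_t = k < n$ and let $x$ be an individual in $P_t$ with $f_1(x)=k$, i.e.\ with $n-k$ ones. For any of the three mating-selection rules, $x$ is selected to produce at least one offspring with probability at least a positive constant (one for fair selection; $1-(1-1/N)^N \ge 1 - 1/e$ for random selection; a constant for binary tournament selection, since $x$ is $f_1$-optimal and hence among the best individuals). Conditioned on $x$ being chosen, the probability that mutation flips exactly one of the $n-k$ one-bits to zero (and nothing else, for bit-wise mutation) is $\frac{n-k}{n}$ for one-bit mutation and at least $(1-\frac1n)^{n-1}\frac{n-k}{n} \ge \frac{n-k}{en}$ for standard bit-wise mutation. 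Such an offspring has $f_1$-value $k+1$ and enters $R_t$, so $a_{t+1} \ge k+1$. Thus in each iteration the best $f_1$-value increases by at least one with probability $\Omega\!\left(\frac{n-k}{n}\right)$. Summing the expected waiting times $\sum_{k=0}^{n-1} O\!\left(\frac{n}{n-k}\right) = O(n \sum_{j=1}^{n} \frac 1j) = O(n\log n)$ via a standard fitness-level (coupon-collector-style) argument yields an expected $O(n\log n)$ iterations until $a_t = n$, i.e.\ until $0^n \in P_t$. A fully symmetric argument on $f_2$ (number of ones) gives $1^n$ in expected $O(n\log n)$ iterations, and combining the two (they can be run in parallel, or the two bounds simply added) gives the claimed $O(n\log n)$ iterations, hence $O(Nn\log n)$ fitness evaluations since each iteration performs $N$ evaluations.

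\textbf{The main obstacle} I anticipate is the binary tournament selection case, where, unlike fair and random selection, the probability of selecting a particular $f_1$-optimal individual depends on the full composition of the population and on how the crowding-distance tie-breaking resolves among equally-ranked individuals. One must verify that an individual carrying the current best $f_1$-value always wins a tournament against a worse individual with at least constant probability, despite the non-dominated sorting possibly placing many individuals on the same front and the crowding-distance comparison being delicate. The clean way around this is to observe that it suffices to lower-bound the probability that \emph{some} individual with $f_1$-value $a_t$ is selected at least once among the $N$ tournaments; since each such individual participates as one of the two uniformly random picks with probability bounded below, and is never beaten in the $f_1$-direction, a constant selection probability follows without needing to untangle the tie-breaking precisely. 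Because this mirrors the corresponding argument in~\cite[Theorems~2 and~6]{ZhengLD22}, I would invoke that analysis rather than redo the tournament bookkeeping in full detail.
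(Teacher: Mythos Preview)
Your proposal is correct and follows essentially the same fitness-level argument as the paper: track the maximum $f_1$-value, use that an $f_1$-extremal individual in $R_t$ has infinite crowding distance and therefore survives, compute the per-step improvement probability as $\Theta((n-k)/n)$, and sum the waiting times to $O(n\log n)$. Two small remarks: your appeal to Lemma~\ref{lem:selection} for monotonicity is formally circular (that lemma assumes $t\ge t_0$, i.e., both extreme points already present), though the fact you need---that the $f_1$-maximal individual has infinite crowding distance and hence survives whenever $N\ge4$---is immediate without it; and for binary tournament selection the paper does not defer to~\cite{ZhengLD22} but gives the direct count you are groping toward, namely that the chosen $x_{\max}$ has infinite crowding distance and there are at most two other individuals with infinite crowding distance (those extremal for $f_2$), so $x_{\max}$ wins its tournament with probability at least $(N-2)/N$, yielding $p_s^k \ge 1-\bigl(1-\tfrac{N-2}{N^2}\bigr)^N=\Theta(1)$.
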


\begin{proof}
We first consider the time to generate $1^n$, that is, the unique search point with largest $f_1$ value. Let $t\ge 0$.  Let $x_{\max}$ be an individual in the parent population $P_t$ with largest $f_1$ value and with infinite crowding distance. Let $k=f_1(x_{\max})$. Let $p_s^k$ denote the probability that $x_{\max}$ appears at least once in the $N$ individuals selected to generate offspring and let $p_+^k$ denote the probability of generating an individual with larger $f_1$ value from $x_{\max}$ via the mutation operator. Note that if there exists individuals with $f_1$ value larger than $f_1(x)$, then one such individual will survive to $P_{t+1}$, and $P_{t+1}$ will have an individual with $f_1$ fitness at least $f_1(x_{\max})+1$. 
The expected number of iterations until this happens is at most $1/(p_s^kp_+^k)$. It is not difficult to see that the largest $f_1$ value in $P_t$ cannot decrease. Since $f_1(1^n)=n$, the expected number of iterations to reach $1^n$ is at most
$\sum_{k=0}^{n-1} \frac1{p_s^kp_+^k}$.

It is not difficult to see that $p_s^k=1$ for the fair selection and at least $1-\left(1-1/N\right)^N\ge 1-1/e$ for random selection. For binary tournament selection, there are at most two individuals with infinite crowding distance but $f_1$ value different from $f_1(x_{\max})$. Hence $p_s^k\ge 1-\left(1-\frac1N \frac{N-2}{N}\right)^N=1-\exp(-(N-2)/N)=\Theta(1)$ in this case. It is also not difficult to see that $p_+^k=(n-k)/n$ for the one-bit mutation, and $p_+^k\ge\frac{n-k}{n}(1-\frac1n)^{n-1}\ge \frac{n-k}{en}$. Hence the expected number of iterations to find $1^n$ is at most
\begin{align*}
\sum_{k=0}^{n-1} \frac1{p_s^kp_+^k}=O\left(\sum_{k=0}^{n-1} \frac{n}{n-k}\right)=O(n\log n).
\end{align*}

Similarly, we could show that the expected number of iterations to have $0^n$ in the population is also $O(n\log n)$. Hence, the expected number of iteration to have the two extreme points in the population is $O(n\log n)$. Since each iteration uses $N$ fitness evaluations, the expected number of fitness evaluations is $O(Nn\log n)$.
\end{proof}

From Lemmas~\ref{lem:approx} and~\ref{lem:twoextreme}, we easily obtain the following theorem on the approximation ability of the \NSGA using the current crowding distance.
\begin{theorem}
Let $N\ge 4$.
Consider using the \NSGA with fair or random parent selection, 
with survival selection using the current crowding distance, and one-bit mutation to optimize the \omm function with problem size $n$. Then after an expected number of $O(N n\log n)$ fitness evaluations, a population containing the two extreme points $0^n$ and $1^n$ and with $\mei(P_t,f_1)\le\max\{\frac{2n}{N-3},1\}$ is reached and kept for all future time.
\label{thm:approx}
\end{theorem}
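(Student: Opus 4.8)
The plan is to assemble the two preceding lemmas, since all the real work has already been carried out there; the theorem is essentially a bookkeeping combination. Set $L := \max\{\frac{2n}{N-3},1\}$. First I would apply Lemma~\ref{lem:twoextreme}, which guarantees that after an expected $O(n\log n)$ iterations a population containing both extreme points $0^n$ and $1^n$ is reached; denote by $t_0$ the first such generation, so that $E[t_0] = O(n\log n)$. Because $N \ge 4$, the two extreme points have infinite crowding distance and hence, by Lemma~\ref{lem:selection}, are never removed in any subsequent selection step; thus $0^n, 1^n \in P_t$ for all $t \ge t_0$.

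Next I would invoke Lemma~\ref{lem:approx} starting from generation $t_0$. It supplies a first generation $t_1 \ge t_0$ with $\mei(P_{t_1},f_1) \le L$ satisfying $E[t_1 - t_0] = O(n)$, and it is essential that this bound is established without any assumption on the state reached at time $t_0$ beyond the presence of the two extreme points. Consequently, conditioning on the configuration at $t_0$ and using the law of total expectation, one obtains $E[t_1] = E[t_0] + E[t_1 - t_0] = O(n\log n) + O(n) = O(n\log n)$ iterations. Translating iterations into fitness evaluations via the factor $N$ then yields the claimed $O(Nn\log n)$ bound.

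Finally, the ``kept for all future time'' part follows from the monotonicity already shown in Lemma~\ref{lem:approx}: once $\mei(P_t,f_1) \le L$ holds, then $\mei(P_{t'},f_1) \le L$ for all $t' \ge t$ with probability one; combined with the persistence of the extreme points established above, this gives a population containing $0^n$ and $1^n$ with $\mei(P_t,f_1) \le L$ that is reached within the stated expected number of evaluations and maintained thereafter.

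I expect essentially no new obstacle here; the only point requiring genuine care is the composition of the two expected-time bounds. One must not naively add two unconditional expectations, but rather exploit that Lemma~\ref{lem:approx} bounds $t_1 - t_0$ uniformly over all admissible starting states at $t_0$, so that the tower property legitimately yields $E[t_1] = O(n\log n)$. The remainder is direct substitution into the results of Lemmas~\ref{lem:selection}, \ref{lem:approx}, and~\ref{lem:twoextreme}.
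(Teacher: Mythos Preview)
Your proposal is correct and follows essentially the same approach as the paper, which simply states that the theorem follows from Lemmas~\ref{lem:approx} and~\ref{lem:twoextreme}. If anything, you are more careful than the paper in explicitly justifying the composition of the two expected-time bounds via the tower property, which the paper leaves implicit.
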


Recalling from Lemma~\ref{lem:optimalmei} that the optimal maximal empty interval size is $\frac{n}{N-1}$, Theorem~\ref{thm:approx} shows that the gaps in the Pareto front are at most by around a factor of $2$ larger than this theoretical minimum. Also, comparing the runtimes in Lemma~\ref{lem:approx} with those in Lemma~\ref{lem:twoextreme} or Theorem~\ref{thm:approx}, we see that the cost for reaching a good approximation is asymptotically negligible compared to the one proven for reaching the two extreme points.

\section{Steady-State Approaches}\label{sec:ssnsga}

In order to overcome the shortcoming of the original \NSGA, we discussed in the previous section the approach to update the crowding distance after each removal. A different way to resolve the problems arising from in parallel removing individuals based on the crowding distance is a steady-state approach, that is, a version of the \NSGA which in each iteration generates only one new solution and selects the new population from the old one plus the new individual. Building on first ideas in this direction in~\cite{SrinivasanR06}, such a steady-state \NSGA was proposed in~\cite{DurilloNLA09} and empirically a good performance was shown.

In this section, we will theoretically prove that the steady-state \NSGA can approximate well the Pareto front. This is also the first mathematical runtime analysis of the steady-state \NSGA.

\subsection{The Steady-State \NSGA}\label{subsec:intoss}

We now make precise the steady-state \NSGA we regard in this work. Apart from the way the offspring is generated, it is identical to the steady-state \NSGA proposed in~\cite{DurilloNLA09}. In this \NSGA, with population size $N$, a single offspring is generated per iteration. Note that when generating a single offspring, fair selection is not well-defined, so we only regard random parent selection. Hence the single offspring will be generated from mutating a randomly selected parent, either via one-bit mutation or via bit-wise mutation. From the combined parent and offspring population $R$ of size $N+1$, the next parent population of size $N$ is selected in the same way as in the classic \NSGA (apart from the different population sizes), that is, non-dominated sorting is applied to $R$ and then all individuals are taken into the next generation apart from one individual in the last front with smallest crowding distance (ties broken randomly). 

We note that the above description of the steady-stage \NSGA requires to run the non-dominated sorting and crowding-distance computation procedures once for each newly generated individual, whereas in the classic \NSGA, these procedures are called only once per $N$ newly generated offspring. A more efficient implementation of the steady-state NSGA was proposed in~\cite{NigmatullinBS16}.
Readers can refer to this paper for more details.

\subsection{Runtime Analysis and Approximation Quality of the Steady-State \NSGA}\label{subsec:runtimess}

We now prove runtime guarantees and approximation guarantees for the steady-state \NSGA of the same quality as those proven for the \NSGA using the current crowding distance in Section~\ref{subsec:approx}. 

We first note that the removal of a least favorable individual from $N+1$ individuals, which is the selection step of the steady-state \NSGA, was already analyzed as a special case of Lemma~\ref{lem:selection}, namely as the last of the $N$ removal steps in the \NSGA with current crowding distance. Since there no particular structure of the $N+1$ individuals was assumed, the proof of this lemma immediately implies the following result

\begin{lemma}\label{lem:selectionss}
Let $N\ge 4$.
Consider using the steady-state \NSGA to optimize the \omm function with problem size~$n$. Let $t_0$ be the first generation such that the two extreme points $0^n$ and $1^n$ are in $P_{t_0}$. Let $t\ge t_0$. Consider the selection of the next population $P_{t+1}$ from $R_t$. Let $x \in R_t$ with crowding distance $\frac{4}{N-3}$ or higher. Then $x \in P_{t+1}$. 

Also, $P_{t+1}$ surely contains the two extreme points.
\end{lemma}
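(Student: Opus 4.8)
The plan is to derive this statement as the single-removal special case of Lemma~\ref{lem:selection}, exactly as the surrounding discussion suggests. The steady-state \NSGA generates one offspring per iteration, so $R_t$ has size $N+1$, and its selection consists of exactly one removal of an individual with smallest current crowding distance (ties broken randomly). For \omm any two search points with distinct $f_1$ value are incomparable, so $R_t$ constitutes a single front; the set $R$ appearing in the proof of Lemma~\ref{lem:selection} is therefore all of $R_t$, and the quantity $r=|R|$ equals $N+1$.

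First I would record that, because $t\ge t_0$ and because $N\ge 4$ prevents the at most four individuals of infinite crowding distance from ever being the (unique) removed one, both extreme points $0^n$ and $1^n$ lie in $R_t$ and survive into $P_{t+1}$. This is precisely the hypothesis that the averaging argument of Lemma~\ref{lem:selection} relies on---that $R$ contains the two extremal individuals---and it establishes the final sentence of the claim immediately.

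Next I would replay the averaging estimate of Lemma~\ref{lem:selection} verbatim for this one removal. Summing the telescoping differences $f_1(y_{i+1})-f_1(y_{i-1})$ and $f_2(z_{j+1})-f_2(z_{j-1})$ over $R_t$ gives $\sum_{x\in R^*} n\,\cDis(x)\le 4n$, where $R^*$ denotes the individuals of finite crowding distance and $|R^*|\ge r-4 = N-3$. By averaging there is some $x\in R^*$ with $\cDis(x)\le \frac{4}{r-4}=\frac{4}{N-3}$, so the individual actually removed---being one of smallest current crowding distance---has crowding distance at most $\frac{4}{N-3}$. Hence any $x$ with crowding distance $\frac{4}{N-3}$ or higher is not removed and lies in $P_{t+1}$.

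The only point needing a word of care is the boundary value. For the full \NSGA of Lemma~\ref{lem:selection} the earlier removals have $r>N+1$ and therefore a strictly smaller threshold $\frac{4}{r-4}$, whereas the steady-state version realizes the extremal case $r=N+1$ in its single removal, so the bound $\frac{4}{N-3}$ is attained rather than strictly beaten. I expect this to be the only place where one must be slightly careful; it is harmless for the intended use, since every subsequent argument (for instance in Lemma~\ref{lem:approx}) needs only that the removed individual has current crowding distance at most $\frac{4}{N-3}$, which is exactly what the averaging step delivers.
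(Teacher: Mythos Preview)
Your proposal is correct and takes essentially the same approach as the paper, which simply observes that the single removal in the steady-state \NSGA is exactly the final removal step analyzed in Lemma~\ref{lem:selection} (with $r=N+1$), so that the averaging argument there applies verbatim. Your added remarks---that all of $R_t$ is a single front for \omm, the inductive survival of the extreme points via $N\ge 4$, and the boundary-value caveat at $\frac{4}{N-3}$---are accurate and match the paper's level of detail; the boundary imprecision you flag is indeed present already in Lemma~\ref{lem:selection} and, as you note, irrelevant for the downstream arguments.
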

Fix some $i \in [0..n-1]$. We reuse from Section~\ref{subsec:approx} the notations of $X_t$ being the length of the empty interval in $f_1(P_t)$ that contains $i+0.5$, and $Y_t$ being the corresponding length for $R_t$. 

From Lemma~\ref{lem:selectionss}, in a fashion analogous to the proof of Lemma~\ref{lem:approx}, we obtain first that if for some $t\ge t_0$ and $M \ge L:=\max\{\frac{2n}{N-3},1\}$ we have $Y_t\le M$, then $X_{t'} \le M$ for all $t'>t$. 
Consequently, (i)~once $X_t\le L$, we have $X_{t'}\le L$ for all $t' \ge t$, and (ii)~if $X_t\ge L$, then $X_{t+1} \le X_t$.

Analogous to the second part of the proof in Lemma~\ref{lem:approx}, we have for the steady-state \NSGA that with probability $\Omega(1/N)$, the empty-interval length $X_t$ reduces in one iteration, that is, that $X_{t+1} \le X_t -1$, when $X_t > L$. This probability was at least $\frac{1}{2e}(1-\frac1e)$  in Lemma~\ref{lem:approx} where we consider the event that a desired individual is contained 
in the $N$ selected parents. Now where only one parent is selected, it is smaller by a factor of $\Theta(N)$. These arguments, as in the proof of Lemma~\ref{lem:approx}, give the following lemma.

\begin{lemma}
Let $N\ge 4$.
Consider using the steady-state \NSGA with random parent selection and with one-bit or bit-wise mutation to optimize the \omm function with problem size $n$. Let $t_0$ be the first generation that the two extreme points $0^n$ and $1^n$ are contained in $P_{t_0}$. Let $t_1 \ge t_0$ be the first generation such that $\mei(P_{t_1},f_1)\le \max\{\frac{2n}{N-3},1\} =: L$. Then $t_1 - t_0 = O(Nn)$, both in expectation and with probability $1-o(1)$. Also, for all $t > t_1$, we have $\mei(P_t,f_1) \le L$ with probability one.
\label{lem:approxss}
\end{lemma}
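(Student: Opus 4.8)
The plan is to mirror the structure of the proof of Lemma~\ref{lem:approx}, exploiting that the discussion preceding the statement has already reduced the work to two ingredients: a monotonicity (survival) argument and a drift-style argument for the shrinking of a single empty interval. For a fixed anchor $i \in [0..n-1]$ I would track $X_t$, the length of the empty interval in $f_1(P_t)$ containing $i+0.5$. The claim~(i) that once $X_t \le L$ we have $X_{t'} \le L$ for all $t' \ge t$, together with~(ii) that $X_{t+1} \le X_t$ whenever $X_t \ge L$, is already established from Lemma~\ref{lem:selectionss} in the text just above the statement; this immediately yields the final assertion that $\mei(P_t,f_1) \le L$ holds with probability one for all $t > t_1$.

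The remaining, and main, task is to bound the time for $X_t$ to drop to $L$. First I would establish the per-iteration reduction probability. When $X_t > L$, let $a \le b$ be the two ends of the interval containing $i + 0.5$ and $x_a, x_b$ corresponding individuals. Exactly as in Lemma~\ref{lem:approx}, at least one of $x_a, x_b$ mutates into an offspring with $f_1$ value in $\{a+1, b-1\}$ with probability at least $\frac{1}{2e}$, for both one-bit and bit-wise mutation. The decisive difference from Lemma~\ref{lem:approx} is that the steady-state algorithm selects a single parent uniformly at random, so this favorable individual is chosen with probability only $1/N$, rather than with probability $\Omega(1)$ as in the case of $N$ selections. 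Hence the probability of producing such an offspring, and thus (via~(ii) and the monotonicity claim) of achieving $X_{t+1} \le X_t - 1$, is $p' = \Omega(1/N)$, smaller than the bound in Lemma~\ref{lem:approx} by a factor of $\Theta(N)$.

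With $p' = \Omega(1/N)$ in hand I would set $T = \lceil 2 n / p' \rceil = O(Nn)$ and run the same concentration argument as before: the drop $X_{t_0} - X_{t_0 + T}$ stochastically dominates $\min\{Z, X_{t_0} - L\}$, where $Z$ is a sum of $T$ independent Bernoulli variables of success probability $p'$, so that $E[Z] \ge 2n$. A multiplicative Chernoff bound gives $\Pr[Z \le n] \le \exp(-\Omega(n))$, whence $X_{t_0+T} \le L$ with probability $1 - \exp(-\Omega(n))$. A union bound over all $n$ anchors $i$ then yields $\mei(P_{t_0+T}, f_1) \le L$ with probability $1 - n\exp(-\Omega(n)) = 1 - o(1)$, i.e.\ $t_1 - t_0 = O(Nn)$ with high probability. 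For the expectation I would use that the argument makes no assumption on the starting configuration, so the failure probability over consecutive blocks of length $T$ decays geometrically, yielding $E[t_1 - t_0] = O(T) = O(Nn)$.

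The step I expect to require the most care is the per-iteration reduction probability and its interaction with the moving definition of the interval: because a single removal per iteration can shift the endpoints $a, b$ (the half-integral anchor $i + 0.5$ is exactly the device that keeps ``the interval'' well-defined across iterations), I must check that the favorable-offspring event, combined with claims~(i)--(ii), genuinely forces $X_{t+1} \le X_t - 1$ regardless of which of the possible resulting intervals (such as $[a..b-1]$, $[a+1..b]$, or $[a+1..b-1]$) materializes, exactly as the corresponding case analysis handles this in Lemma~\ref{lem:approx}.
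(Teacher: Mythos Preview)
Your proposal is correct and follows essentially the same approach as the paper: the paper's proof (given in the paragraphs immediately preceding the lemma) explicitly reduces to the argument of Lemma~\ref{lem:approx} with the single modification that the per-iteration reduction probability drops by a factor of $\Theta(N)$ because only one parent is selected, and then invokes the same Chernoff/union-bound/geometric-restart machinery. Your choice of $T = \lceil 2n/p' \rceil = O(Nn)$ and your handling of claims~(i)--(ii), the anchor $i+0.5$, and the expectation bound all match the paper's treatment.
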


It remains to analyze the time to have the two extreme points $0^n$ and $1^n$ in the population. This analysis is completely identical to the one in Lemma~\ref{lem:twoextreme} apart from the calculation of the probability $p^k_s$, which now is the probability that the unique parent selected in one iteration is an individual with maximal $f_1$-value and infinite crowding distance. 

For random selection, this probability is at least $p^k_s \ge 1/N$. For binary tournament selection, $p^k_s \ge \frac 1N \frac{N-2}{N} = \frac{N-2}{N^2}$, again using the argument that there are at most two individuals with infinite crowding distance and $f_1$-value different from the maximal value. Naturally, these probabilities are by a factor of $\Theta(N)$ smaller than the values computed in Lemma~\ref{lem:twoextreme}, where in each iteration $N$ offspring were generated. Consequently, in terms of iterations, our bound for the time to find the two extremal points is by a factor of $\Theta(N)$ larger.
\begin{lemma}
Consider using the steady-state \NSGA with one of the following four ways to generate the offspring, namely, applying random selection or binary tournament selection and applying one-bit mutation or standard bit-wise mutation. Then after an expected number of $O(Nn\log n)$ iterations (fitness evaluations), the two extreme points $0^n$ and $1^n$ are contained in the population.
\label{lem:twoextremess}
\end{lemma}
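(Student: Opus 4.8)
The plan is to mirror the proof of Lemma~\ref{lem:twoextreme} essentially verbatim, since the only structural change in the steady-state variant is that a single offspring is generated per iteration; this affects only the parent-selection probability $p_s^k$ and leaves both the mutation-success probability $p_+^k$ and the overall summation scheme untouched. First I would fix attention on the time to generate $1^n$, letting $x_{\max}$ denote an individual in $P_t$ of maximal $f_1$-value $k$, which as a boundary point of the $f_1$-sorting has infinite crowding distance. By Lemma~\ref{lem:selectionss} such an individual is never removed, so I would record at the outset that the maximal $f_1$-value in the population is non-decreasing: if no fitter offspring appears then $x_{\max}$ survives, and if an offspring $y$ with $f_1(y)=k+1$ is generated then $y$ is the unique new maximiser, again has infinite crowding distance, and hence survives into $P_{t+1}$.

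Next I would bound the waiting time to raise the maximal $f_1$-value by one as a geometric random variable with success probability $p_s^k p_+^k$. The factor $p_+^k$ is identical to the one in Lemma~\ref{lem:twoextreme}, namely $(n-k)/n$ for one-bit mutation and at least $(n-k)/(en)$ for bit-wise mutation. The factor $p_s^k$, the probability that the single selected parent is $x_{\max}$, is where the $\Theta(N)$ slowdown enters: for random selection $p_s^k \ge 1/N$, and for binary tournament selection I would argue $p_s^k \ge \frac 1N \cdot \frac{N-2}{N} = \frac{N-2}{N^2}$, using that at most two individuals with infinite crowding distance have $f_1$-value different from the maximum, so that, conditioned on $x_{\max}$ being the first of the two sampled competitors, the second competitor is with probability at least $(N-2)/N$ either $x_{\max}$ itself or an individual that $x_{\max}$ beats on crowding distance.

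Summing the geometric waiting times over $k=0,\dots,n-1$ then gives
\begin{align*}
\sum_{k=0}^{n-1}\frac{1}{p_s^k p_+^k} = O\left(\sum_{k=0}^{n-1}\frac{Nn}{n-k}\right) = O(Nn\log n)
\end{align*}
iterations to produce $1^n$, and the symmetric argument yields the same bound for $0^n$; adding the two gives $O(Nn\log n)$ iterations until both extreme points are present. Since the steady-state \NSGA performs exactly one fitness evaluation per iteration, this is simultaneously the fitness-evaluation count. I expect no genuine obstacle here beyond bookkeeping; the only point requiring care is the binary tournament estimate, where one must correctly account for the (at most constantly many) infinite-crowding-distance individuals whose $f_1$-value differs from the maximum, in order to guarantee that $x_{\max}$ actually wins its tournament with probability $\Omega(1/N)$.
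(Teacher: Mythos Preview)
Your proposal is correct and follows essentially the same approach as the paper, which likewise reduces to Lemma~\ref{lem:twoextreme} and recomputes only $p_s^k$, obtaining exactly your bounds $p_s^k \ge 1/N$ for random selection and $p_s^k \ge (N-2)/N^2$ for binary tournament. One small point: you invoke Lemma~\ref{lem:selectionss} to argue that $x_{\max}$ survives, but that lemma is stated only for $t \ge t_0$, i.e., after both extreme points are already present, so citing it here is circular; the paper instead argues directly (as in the proof of Lemma~\ref{lem:twoextreme}) that an individual with maximal $f_1$ value has infinite crowding distance and hence is never the one removed.
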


From Lemmas~\ref{lem:approxss} and~\ref{lem:twoextremess}, we have the following theorem for the approximation ability of the steady-state \NSGA.
\begin{theorem}
Let $N\ge 4$.
Consider using the steady-state \NSGA with random selection and with one-bit or standard bit-wise mutation to optimize the \omm function with problem size $n$. Then after an expected number of $O(N n\log n)$ fitness evaluations, a population containing the two extreme points $0^n$ and $1^n$ and with $\mei(P_t,f_1)\le\max\{\frac{2n}{N-3},1\}$ is reached and kept for all future time.
\label{thm:approxss}
\end{theorem}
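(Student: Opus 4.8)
The plan is to obtain this theorem by simply combining the two preceding lemmas, exactly as Theorem~\ref{thm:approx} was assembled from Lemmas~\ref{lem:approx} and~\ref{lem:twoextreme} in the previous section; no genuinely new argument is required. First I would invoke Lemma~\ref{lem:twoextremess} for the steady-state \NSGA with random selection and one-bit or standard bit-wise mutation. This guarantees that after an expected number of $O(Nn\log n)$ iterations the population contains both extreme points $0^n$ and $1^n$. Let $t_0$ denote the first such generation. Since in the steady-state variant each iteration creates exactly one offspring, one iteration corresponds to exactly one fitness evaluation, so this is equivalently an expected $O(Nn\log n)$ fitness evaluations.

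Second, starting from generation $t_0$, I would apply Lemma~\ref{lem:approxss}. Its hypotheses ($N\ge 4$, random parent selection, one-bit or bit-wise mutation, and the two extreme points present at $t_0$) are precisely met, so it yields a generation $t_1\ge t_0$ with $\mei(P_{t_1},f_1)\le L:=\max\{\frac{2n}{N-3},1\}$ and $t_1-t_0=O(Nn)$ in expectation. The single point that deserves a little care is the additivity of the two phases: Lemma~\ref{lem:approxss} bounds $E[t_1-t_0]$ by $O(Nn)$ uniformly over every admissible configuration of $P_{t_0}$ (it assumes nothing about $P_{t_0}$ beyond containing the two extremes). Conditioning on the state at $t_0$ and using the tower rule therefore gives $E[t_1]\le E[t_0]+O(Nn)=O(Nn\log n)$. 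Hence the expected number of fitness evaluations until a population with the claimed MEI guarantee is reached is $O(Nn\log n)$, matching the statement.

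Finally, the ``kept for all future time'' part is inherited directly from the structural conclusions of Lemmas~\ref{lem:selectionss} and~\ref{lem:approxss}. By Lemma~\ref{lem:selectionss}, the two extreme points have infinite crowding distance and, since $N\ge 4$, are never selected for removal, so they remain in the population indefinitely. By the monotonicity established in Lemma~\ref{lem:approxss} (once the empty interval $X_t$ containing a half-integral point $i+0.5$ satisfies $X_t\le L$, we have $X_{t'}\le L$ for all $t'\ge t$ with probability one), no empty interval that has shrunk to length at most $L$ ever grows back; taking the maximum over all $i\in[0..n-1]$ shows $\mei(P_t,f_1)\le L$ for all $t\ge t_1$ with probability one.

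Because every probabilistic estimate has already been carried out inside the two lemmas, I do not expect any real obstacle in this proof. The only things to watch are the uniform-over-starting-state phrasing needed to add the two expected hitting times, and the bookkeeping observation that for the steady-state algorithm iterations and fitness evaluations coincide up to a constant factor. If desired, one could close by recalling from Lemma~\ref{lem:optimalmei} that $L$ exceeds the optimal value $\lceil n/(N-1)\rceil$ by at most roughly a factor of two, emphasising that the guaranteed approximation is near-optimal.
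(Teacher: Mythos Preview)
Your proposal is correct and follows exactly the paper's own approach, which simply states that the theorem follows from Lemmas~\ref{lem:approxss} and~\ref{lem:twoextremess}; your write-up merely spells out the details (tower rule for summing the two expected phases, the iteration--evaluation equivalence for the steady-state variant, and the persistence argument) that the paper leaves implicit.
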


Note that except Lemma~\ref{lem:twoextremess} also discussing the binary tournament selection, we only consider random parent selection in other propositions in this section. It is not clear how to extend them to tournament selection, and we leave it as an interesting open question for future research.

\section{Experiments}\label{sec:exp}

In Section~\ref{sec:nsgaii} we conducted a theoretical analysis of two synthetic situations to show that the traditional \NSGA can have difficulties in approximating the Pareto front. The complicated population dynamics prevented us from analyzing how often such situations arise. 
In Section~\ref{sec:onthefly} and~\ref{sec:ssnsga}, we proved that the \NSGA using the current crowding distance and the steady-state \NSGA leave gaps on the Pareto front that are asymptotically at most a factor of $2$ larger than those given by a perfect approximation of the Pareto front. To complete the picture, in this section, we present some experimental results.

\subsection{Settings}
We conduct experiments with the following settings.
\begin{itemize}
\item Problem: \omm, the benchmark in our theoretical results in Sections~\ref{sec:nsgaii} and~\ref{sec:onthefly}. 
\item Problem size $n$: 601. Given that the \oneminmax problem is an easy multi-objective problem, this is a moderate problem size. Such a choice is sensible, since with a too small size we will not gain reliable insights, whereas insights obtained on a large problem size raise the question whether they still apply to practically relevant problem sizes. We use the odd number $n=601$ to include the setting discussed in Lemma~\ref{lem:bigMEI}.
\item Algorithms: We regard the classic \NSGA using the initial crowding distance for the selection, the \NSGA using the current crowding distance, and the steady-state \NSGA. As in our theoretical analysis, we do not use crossover. That is, mutation is the only operator to generate the offspring population. 
\item Mating selection and mutation strategy: We select the parents of the mutation operations via fair selection for the two generational \NSGA variant. For the steady-state \NSGA, we select a random individual as parent. As mutation operators, we use one-bit mutation. These setting are included in our mathematical runtime analysis. We have no reason to believe that other settings, e.g., random selection and bit-wise mutation (as also covered in our mathematical analyses), lead to substantially different results.
\item Population size $N$: $(n+1)/2=301, \lceil (n+1)/4 \rceil=151,$ and $\lceil (n+1)/8\rceil=76$. We choose $(n+1)/2$ as this value is used in Lemma~\ref{lem:bigMEI}. We did not regard larger population sizes, since for $N=n+1$ experiments where conducted in~\cite{ZhengLD22}. The two smaller population sizes are used to see how the approximation ability scales with the population size.
\item $20$ independent runs for each setting.
\end{itemize}

\subsection{Results}

Our focus is the maximal length of an empty interval on the Pareto front, that is, the $\mei$ value defined earlier. As long as the population has not fully spread out on the Pareto front, that is, the extremal solutions $0^n$ and $1^n$ are not yet part of the population, enlarging the spread of the population is more critical than a balanced distribution in the known part of the front. For this reason, we only regard times after both extremal solutions have entered the population. 

To see whether the approximation quality changes over time, we regard separately the two intervals of $[1..100]$ and $[3001..3100]$ generations after finding the two extremal solutions. We collect statistical data on the $\mei$ value in these intervals in Table~\ref{tbl:omm} and we display exemplary runs in Figure~\ref{fig:omm}.
\begin{table}[h!]
\centering
\caption{The first, second, and third quartiles (displayed in the format of $(\cdot,\cdot,\cdot)$) for the maximal empty interval sizes $\mei$ within $100$ generations and $20$ independent runs. We regard separately generations $[1..100]$ and $[3001..3100]$ after the two extremal points have entered the population. For the steady-state \NSGA with population size $N$, we regard generations $[1..100N]$ and $[3000N+1..3100N]$ for a fair comparison.}
\label{tbl:omm}
\begin{tabular}{lcc}
\toprule
Generations & $[1..100]$ & $[3001..3100]$ \\
\midrule
\multicolumn{3}{l}{\underline{$N=301$}} \\
Initial CD & (7,8,9) & (7,8,9)  \\
Current CD & (3,3,3) & (3,3,3) \\
Steady-State & (3,3,3) & (3,3,3) \\
\midrule
\multicolumn{3}{l}{\underline{$N=151$}} \\
Initial CD & (14,15,17) & (14,15,17)  \\
Current CD & (5,5,6) & (5,5,6)  \\
Steady-State & (5,5,5) & (5,5,5) \\
\midrule
\multicolumn{3}{l}{\underline{$N=76$}} \\
Initial CD & (23,26,29) & (24,27,30)  \\
Current CD & (11,12,12) & (11,12,12)  \\
Steady-State & (11,12,12) & (11,11,11) \\
\bottomrule
\end{tabular}
\end{table}

\begin{figure}
\centering
\includegraphics[width=0.75\columnwidth]{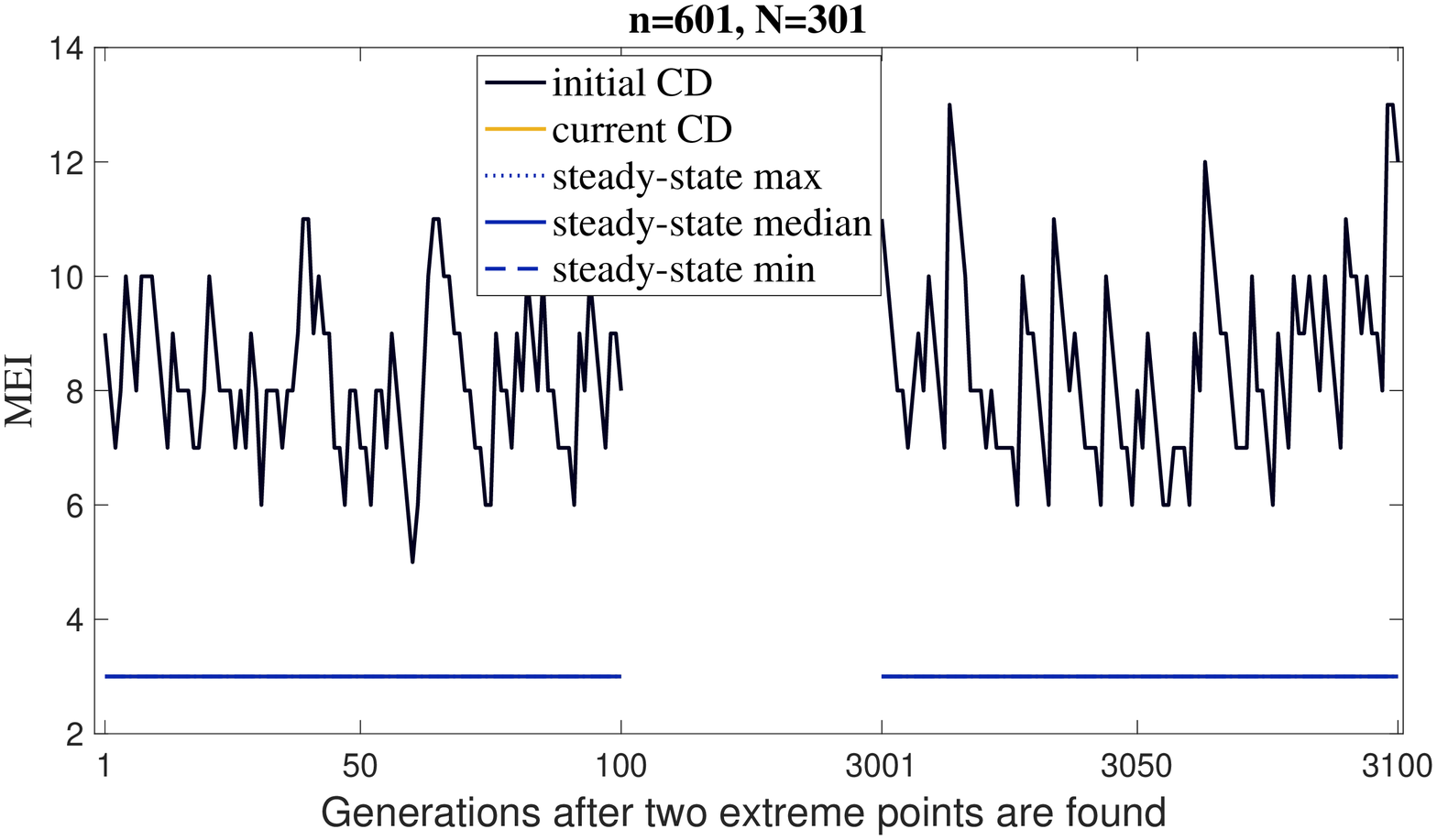} 
\includegraphics[width=0.75\columnwidth]{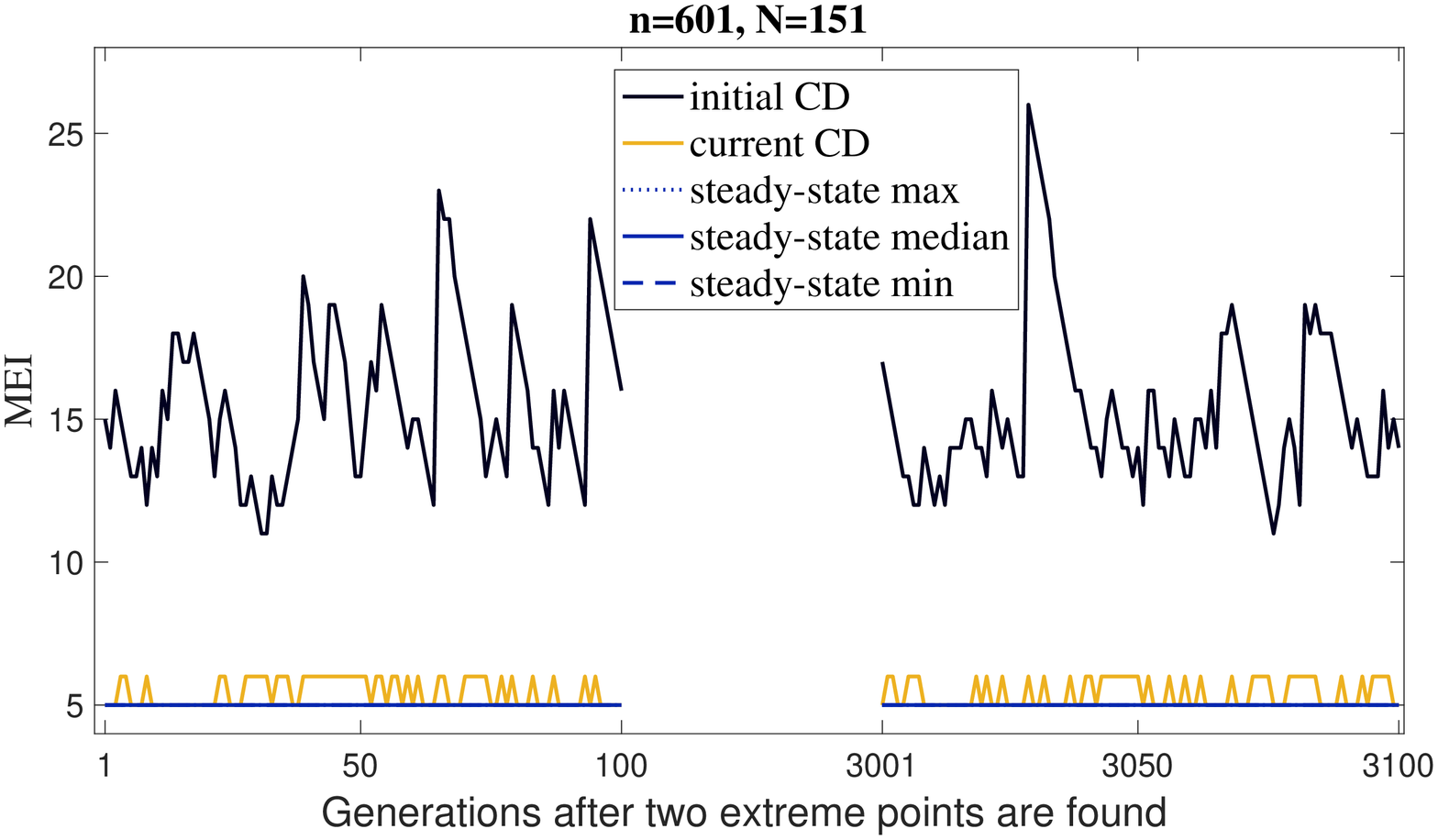}
\includegraphics[width=0.75\columnwidth]{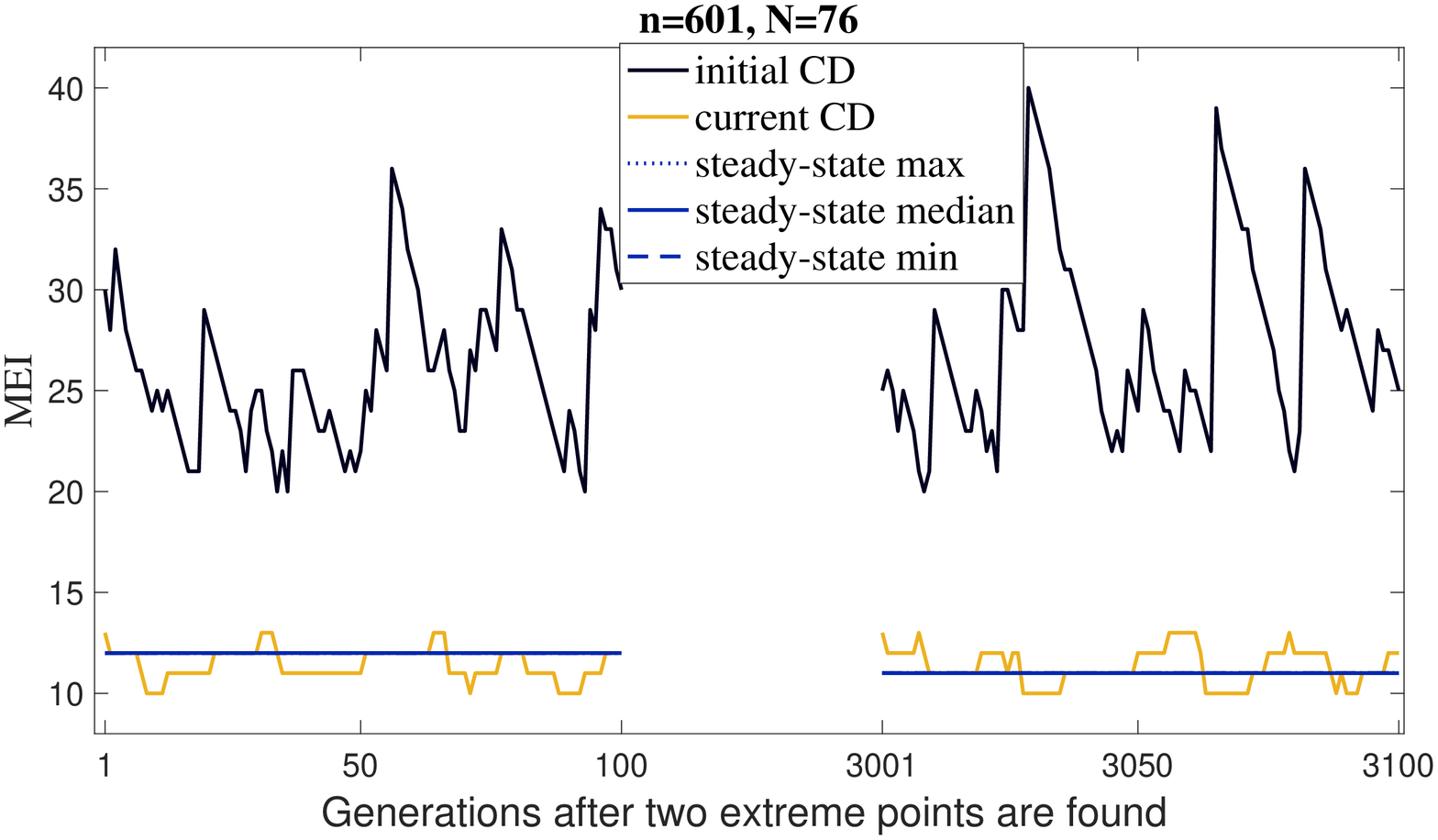} 
\caption{The $\mei$ for generations $[1..100]$ and $[3001..3100]$ after the two extreme points were found, in one exemplary run. For the steady-state \NSGA with population size $N$, at point $i$ on the x-axis actually the data for generations $[1+N(i-1)..Ni$ are displayed, namely the maximal, minimal, and median MEI value in this interval (which always happened to be identical, and also identical to the current-CD value for $N=301$). }
\label{fig:omm}
\end{figure}

The optimal MEI value $\lceil n/(N-1) \rceil$ for \omm with $n=601$ and population sizes $N=301,151,76$ equals $3,5,$ and $9$, respectively. From Table~\ref{tbl:omm}, we see that the modified \NSGA and the steady-state \NSGA often reach the optimal MEI for $N=301$ and $151$, and are only slightly sub-optimal values for $N=76$. In contrast, the traditional \NSGA shows median MEI values of $8,15,26$ in the better time interval. This is more than twice the optimal MEI and the median MEI of the two \NSGA variants that overcome the problem with the initial crowding distance. 

We observe no greater differences between the $100$ generations right after finding the two extremal points and the $100$ generations $3000$ generations later. For $N=76$, the \NSGA with initial crowding distance displays slightly larger MEI values in the later interval, whereas the steady-state \NSGA shows slightly smaller values. We do not have an explanation for this, but the small differences render it not very interesting to investigate this observation further.

In Figure~\ref{fig:omm}, we see that the MEI value oscillates considerably for the classic \NSGA, whereas it is relatively stable for the \NSGA using the current crowding distance and constant for the steady state \NSGA. This appears to be the second advantage of the two variants of the \NSGA.

Our experimental data is not sufficient to answer the question if the traditional \NSGA suffers from super-constant MEI values. Our theoretical result in Lemma~\ref{lem:bigMEI} could be seen as an indication that logarithmic MEI values can show up (or MEI values of order $\Theta(\frac{n}{N}\log N)$ for general values of $N\le n$). To answer this question, significantly more experiments with truly large problem sizes would be necessary (due to the slow growth behavior of logarithmic functions). For our purposes, our results, however, are fully sufficient. They show clearly that the two variants of the \NSGA not building on the initial crowding distance yield much smaller and more stable MEI values. Not surprisingly, the experimentally observed MEI values for these two variants are  better than the mathematical guarantee given in Theorems~\ref{thm:approx} and~\ref{thm:approxss}. 


\section{Conclusion}\label{sec:con}

None of the existing runtime analyses for the \NSGA (including those published after our conference version~\cite{ZhengD22gecco}) regards the performance of the \NSGA when the population size is smaller than the size of the Pareto front. In this work, we regard this situation and discuss how well the population evolved by the \NSGA approximates the Pareto front. Our theoretical analysis of two artificial cases and our experiments give a mixed picture. However, they also suggest that the reason for the not fully satisfying approximation behavior is the fact that the selection of the next parent population is based on the initial crowding distance of individuals in the combined parent and offspring population, which can be very different from the crowding distance at the moment when an individual is removed, an effect previously observed with less mathematical arguments in~\cite{KukkonenD06}.

For the \NSGA building on the current crowding distance, we proved very positive approximation results for the \oneminmax benchmark. After an expected time comparable to the runtime of the classic \NSGA on the \oneminmax benchmark, a population is reached that covers the Pareto front apart from gaps that are only a constant factor larger than in an optimal approximation. This state of the population is maintained for the remaining runtime, with probability one. We further discussed the steady-state \NSGA and proved the same good approximation ability. Our experiments confirm the superiority of these two selection strategies.

From our proofs, we conjecture that similar results can be obtained for other classic benchmark problems such as \lotz or the large front problem~\cite{HorobaN08}. 
Overall, this work gives additional motivation to prefer two less common variants of the \NSGA, the \NSGA working with the current crowding distance~\cite{KukkonenD06} (around 180 citations on Google scholar) and the steady-state \NSGA~\cite{DurilloNLA09} (76 citations), over the classic \NSGA (over 50,000 citations).

We want to mention two open problems arising from this work. As already discussed, due to the complex population dynamics, we were not able to conduct a mathematical analysis of the approximation ability of the classic \NSGA. So neither we could prove, say, a super-constant lower bound on the typical MEI value for, say, $N \approx n/2$, nor we could prove any interesting upper bound on the MEI value, say a logarithmic factor above the ideal value. This is clearly an interesting problem area to further explore. On the more technical side, we note that in this first work on the approximation strength of the \NSGA we only regarded fair and random parent selection. Our proofs do not apply to tournament selection, and we feel that substantially different arguments will be necessary to prove approximation guarantees in this case. 

\section*{Acknowlegements}
This work was supported by National Natural Science Foundation of China (Grant No. 62306086), Science, Technology and Innovation Commission of Shenzhen Municipality (Grant No. GXWD20220818191018001), Guangdong Basic and Applied Basic Research Foundation (Grant No. 2019A1515110177).

This work was also supported by a public grant as part of the Investissement d'avenir project, reference ANR-11-LABX-0056-LMH, LabEx LMH.

We thank Hisao Ishibuchi for pointing out Kukkonen and Deb's work~\cite{KukkonenD06}. We also thank Ke Shang for his question about the behavior of the steady-state \NSGA at GECCO 2022.


\newcommand{\etalchar}[1]{$^{#1}$}

\end{document}